\newtheorem*{theorem*}{Theorem}
\newtheorem{theorem}{Theorem}
\newtheorem{lemma}{Lemma}
\newtheorem*{corollary*}{Corollary}
\newtheorem{corollary}{Corollary}
\newtheorem{definition}{Definition}
\newtheorem*{definition*}{Definition}
\newtheorem{proposition}{Proposition}
\newtheorem*{proposition*}{Proposition}
\def\cS{{\mathcal{S}}}
\def\cD{{\mathcal{D}}}
\def\cA{{\mathcal{A}}}
\def\cN{{\mathcal{N}}}
\def\cM{{\mathcal{M}}}
\def\cT{{\mathcal{T}}}
\def\cL{{\mathcal{L}}}
\def\cR{{\mathcal{R}}}
\def\hT{{\widehat{\mathcal{T}}}}
\def\EE{{\mathbb{E}}}
\def\RR{{\mathbb{R}}}
\def\hD{{\widehat{\mathcal{D}}}}
\title{Diverse Randomized Value Functions: A Provably Pessimistic Approach for Offline Reinforcement Learning
}
\author{
  Xudong Yu \\
  Harbin Institute of Technology \\
  \texttt{hit20byu@gmail.com} \\
  \And
  Chenjia Bai \\
  Shanghai AI Laboratory \\
  \texttt{baichenjia@pjlab.org.cn} \\
  \And
  Hongyi Guo \\
  Northwestern University \\
  \texttt{Hongyiguo2025@u.northwestern.edu} \\
  \And
  Changhong Wang \\
  Harbin Institute of Technology \\
  \texttt{cwang@hit.edu.cn} \\
  \And
  Zhen Wang \\
  Northwestern Polytechnical University \\
  \texttt{zhenwang0@gmail.com} \\
}
\begin{document}
\maketitle

\begin{abstract}
Offline Reinforcement Learning (RL) faces distributional shift and unreliable value estimation, especially for out-of-distribution (OOD) actions. To address this, existing uncertainty-based methods penalize the value function with uncertainty quantification and demand numerous ensemble networks, posing computational challenges and suboptimal outcomes. 
In this paper, we introduce a novel strategy employing diverse randomized value functions to estimate the posterior distribution of $Q$-values. It provides robust uncertainty quantification and estimates lower confidence bounds (LCB) of $Q$-values. By applying moderate value penalties for OOD actions,  our method fosters a provably pessimistic approach. We also emphasize on diversity within randomized value functions and enhance efficiency by introducing a diversity regularization method, reducing the requisite number of networks. These modules lead to reliable value estimation and efficient policy learning from offline data.
Theoretical analysis shows that our method recovers the provably efficient LCB-penalty under linear MDP assumptions. Extensive empirical results also demonstrate that our proposed method significantly outperforms baseline methods in terms of performance and parametric efficiency.
\end{abstract}

\keywords{Offline Reinforcement Learning \and Randomized Value Functions \and Pessimism \and Diversification \and Distributional Shift}

\section{Introduction}
Offline Reinforcement Learning, which aims to learn policies from a fixed dataset without interacting with the environment \cite{levine2020offline}, is appealing for applications that are costly to perform online exploration \cite{robot-2018, mt-opt}. Compared to online RL, offline RL provides the advantage of avoiding potentially risky interactions, making it suitable for scenarios such as autonomous driving, healthcare, and industrial applications. Additionally, as a \emph{data-driven} paradigm, offline RL can utilize large-scare datasets and recent advances in fields that adopt large-scale training data \cite{dt, tt}. A key challenge in offline RL is the distributional shift, which arises from the discrepancy between the learned policy and the behavior policy. This can lead to extrapolation errors when estimating the Q-values for OOD data, further exacerbated during Bellman updates due to bootstrapping and function approximation. As a result, the distributional shift problem can significantly degrade the performance of agents.

Previous offline RL methods have attempted to address the distributional shift problem by either restricting the learned policy to stay within the distribution or support of the dataset \cite{bcq-2019, bear-2019} or penalizing OOD actions to perform value regularization \cite{cql-2020}. In these approaches, agents are expected to choose in-sample or in-distribution actions as much as possible and avoid any OOD actions. However, these methods rely heavily on the accurate estimation of behavior policies or prevent the agent from selecting any OOD actions, without distinguishing between potentially good or bad OOD actions. Accordingly, these approaches often result in overly conservative policies. In contrast, we posit that there may be potentially good OOD actions, and it is more effective to identify them with an accurate uncertainty quantifier for their values.

Uncertainty-based methods provide an alternative and promising way to address the distributional shift by characterizing the uncertainty of $Q$-values and pessimistically updating the estimation. In this case, accurate and reliable uncertainty quantification is crucial, especially for OOD data points. To this end, PBRL \cite{PBRL-2022} measures the uncertainty by the disagreement of bootstrapped $Q$-functions and performs OOD sampling to handle the extrapolation error. Similarly, SAC-N \cite{EDAC-2021} enhances the critic with a large number of ensembles and achieves strong performance in offline environments. In spite of such progress, these methods require lots of ensembles, which can be computationally expensive and impractical in many cases. Furthermore, a large number of ensembles may suffer from ensemble performance saturation and mode collapse to several modes \cite{d2021repulsive}, indicating that ensemble members tend to produce similar predictions. They also lack the consideration of a prior distribution and fail to ensure the diversity among predictions to characterize the true posterior. Therefore, the following question arises: \emph{\textbf{Can we obtain a reliable uncertainty quantification in a principled way with fewer ensembles?}}

In this paper, we propose Diverse Randomized Value Functions (DRVF) for offline RL, which combine randomly initialized $Q$ ensembles and diversity constraints and enforce pessimism with a lightweight approximate posterior. Our approach incorporates two components for principled and reliable uncertainty estimation: approximate Bayesian learning with randomized $Q$ ensembles and repulsive regularization. Firstly, the posterior distribution of the value function can be estimated from randomzied value functions, and then the low-confidence-bound (LCB) of the $Q$-posterior contributes to pessimistic value updates. Theoretically, the Bayesian uncertainty defined by the disagreement among different $Q$-samples from the posterior recovers the provably efficient LCB-penalty in the Bayesian linear regression case. In practice, exact Bayesian inference is computationally infeasible for large-scale problems. Therefore, we use a Bayesian neural network (BNN) in the last layer of the $Q$-network to learn an approximate Bayesian posterior. We show that by combining BNNs and ensembles, DRVF can provide reliable uncertainty quantification and perform more efficient value updates. 

In addition to approximate Bayesian learning, we propose a repulsive regularization term to promote diversity among different samples from the ensemble BNNs. Maintaining diversity among these samples is crucial for both achieving high performance and ensuring computational efficiency. The repulsive term restrains different $Q$ samples from collapsing into the same function by regularizing the OOD data sampled following the learned policy. This regularization improves uncertainty quantification by enforcing diversity among posterior samples. We also show that the repulsive force helps to recover the true Bayesian posterior by building theoretical connections with the gradient flow for the optimization problem. DRVF obtains the uncertainty estimation by sampling several $Q$-values from the posterior distribution and learns the policy by following the LCB of the value function.
Compared to SAC-N that heavily depends on the number of networks in the ensemble, DRVF quantifies the uncertainty with a small number of ensembles by sampling multiple $Q$ values from the ensemble BNNs.
Empirical evaluations on the D4RL benchmarks \cite{d4rl-2020} demonstrate the favorable performance of our proposed approach, showcasing improved parametric efficiency with fewer ensembles compared to previous uncertainty-based methods. Our contributions can be summarized as follows:
\begin{itemize}
    \item We propose DRVF, a lightweight uncertainty-based algorithm for offline RL. DRVF employs BNNs to approximate the Bayesian posterior by leveraging a small number of ensembles and measures uncertainty via posterior sampling.
    \item We introduce a repulsive regularization term to enhance the diversity among the samples from the ensemble BNNs, leading to improved parametric efficiency.
    \item Theoretically, we establish the relationship between the proposed Bayesian uncertainty and the LCB penalty, providing provable efficient pessimism in linear MDPs. We also demonstrate that the repulsive term aids in recovering the true Bayesian posterior.
    \item We perform multiple experiments and ablation studies to evaluate our algorithm. The empirical results illustrate competitive performance and reasonable uncertainty quantification.
\end{itemize}

The rest of this paper is structured as follows. Section 2 first provides preliminary backgrounds of linear MDPs and offline RL, and Section 3 describes the related work. In the next section, we introduce the Bayesian uncertainty based on randomized value functions, including the approximation of the Bayesian posterior (Section 4.1), the LCB penalty (Section 4.2), and the pessimistic learning procedure (Section 4.3). Section 5 presents the repulsive regularization term, including our intuition and concrete implementations. Section 6 presents the results of our empirical study, including performance, uncertainty quantification, computational efficiency, and ablation studies, and Section 7 concludes the paper. Proofs and more experimental details can be found in the Appendix.



\section{Preliminaries}
In this section, we introduce preliminary background about episodic Markov Decision Process (MDP), linear MDP, and Offline reinforcement learning.

\subsection{Episodic MDP}
We consider an Markov Decision Process with finite horizons, formulated as $\cM=(\cS,\cA,P, r,\gamma,T)$. It contains the state space $\cS$, the action space $\cA$, the transition dynamics $P$, and the reward function $r:\cS\times\cA\rightarrow\RR$, as well as the discount factor $\gamma\in(0,1]$ and the episode length $T$. 
In RL, a policy $\pi(a|s)$ is learned to maximize the cumulative discounted reward as $\EE[\sum_{t=0}^{T}\gamma^t r(s_t,a_t)]$. This objective can be optimized by estimating a state or state-action value function, which gives the expected cumulative reward and then recovers a near-optimal policy. We present the recursive definitions for these value functions as 
\begin{equation}
    V(s)=\mathbb{E}_{a\sim \pi(\cdot|s)}[Q(s,a)],
    \quad Q(s,a)=r(s,a)+\gamma \mathbb{E}_{s'\sim P(\cdot|s,a)}[V(s')],
\end{equation} 
where the superscript $\pi$ is omitted.
These two equations can be combined to express the $Q(s,a)$ in terms of $Q(s',a')$:
\begin{equation}
Q(s,a)=r(s,a)+\gamma \mathbb{E}_{s'\sim P(\cdot|s,a),a'\sim \pi(\cdot|s)}[Q(s',a')].
\end{equation}

In this work, we use a parameterized state-action value function $Q_w(s,a)$, where $w$ denotes the parameter of the Q network. The $Q$-function is learned corresponding to the current policy $\pi(\cdot|s)$ and obeys the following equation: 
\begin{equation}
    Q_w(s,a)=r(s,a)+\gamma \mathbb{E}_{s'\sim P(\cdot|s,a),a'\sim \pi(\cdot|s)}[Q_w(s',a')].
\end{equation}
This equation can be expressed as the Bellman operator, which indicates the learning target for the $Q$-network. By employing the target network that is softly updated for stability \cite{DQN-2015}, the Bellman operator is given by:
\begin{equation}
\label{eq:offline}
    \hT Q_w(s,a)=r(s,a)+\gamma \EE_{s'\sim P(\cdot|s,a),a'\sim \pi(\cdot|s)}[Q_{w^{-}}(s',a')],
\end{equation}
where $w^{-}$ represents the parameter of the target $Q$ network. The $Q$-function is learned by minimizing the Bellman residual error as $\EE_{s'\sim P(\cdot|s,a), a\sim \pi(\cdot|s)} [Q(s,a)-\cT Q(s,a)]^2$.

In offline RL, the agent needs to learn a policy from a static dataset $\cD_m=\{s_t^k,a_t^k,r_t^k,s_{t+1}^k\}_{k\in[m]}$ that contains $m$ episodes. The Bellman operator $\hT Q$ is then computed by estimating the expectation based on the offline datasets $\cD_m$, and the Bellman error becomes $\EE_{(s,a,r,s')\sim\cD_m}[Q(s,a)-\hT Q(s,a)]^2$ by sampling the experiences from the dataset. The policy $\pi(\cdot|s)$ is learned by maximizing the value function $Q(s,a)$. However, the problem of offline RL is that the policy $\pi(\cdot|s)$ can greedily choose OOD actions, which are not covered in $\cD_m$. These OOD actions cause high extrapolation errors in the value function estimation, known as distributional shift. Since these errors cannot be corrected by online interactions, they often result in sub-optimal policies.

\subsection{Linear MDP}
Our theoretical derivations have close connections with linear MDPs and Least Squares Value Iteration (LSVI). The former is a widely used assumption \cite{pevi-2021, lsvi-2020}, while the latter is a classic method frequently used in the linear MDPs to compute the closed-form solution.
In the following, we introduce the pessimistic value estimation in linear MDPs \cite{lsvi-2020,pevi-2021}, where the transition dynamics and reward function are expressed for $\forall(s_{t}, a_t, s_{t+1})\in\mathcal{S}\times\mathcal{A}\times\mathcal{S}$, as
\begin{equation}
\label{eq::pf_def_linearMDP}
P(s_{t+1} \,|\, s_t, a_t) = \langle \psi(s_t, a_t), \varphi(s_{t+1}) \rangle, \quad
r(s_t, a_t) = \psi(s_t, a_t)^\top \upsilon,
\end{equation}
where $\psi: \cS\times\cA\mapsto \RR^d$ represents a known feature embedding. Meanwhile, the reward function $r:\cS\times\cA\mapsto[0, 1]$ and the feature $\|\psi\|_2 \leq 1$ are also assumed to be bounded.

Based on the above assumption, the state-action value function can also be linear to $\psi$, as
\begin{equation}
\begin{aligned}
Q^{\pi}(s_t,a_t)
&=r(s_t,a_t)+\mathbb{E}_{s' \sim P(\cdot|s,a)}V(s') \\
&=\psi(s_t, a_t)^\top \upsilon + \int_{s'\in \mathcal{S}}V^{\pi}(s')\langle \psi(s_t, a_t), {\rm d}\varphi(s_{t+1}) \rangle \\
&=\psi(s_t, a_t)^{\top}w,
\end{aligned}
\end{equation}
where $w=\upsilon + \int_{s'\in \mathcal{S}}V^{\pi}(s') {\rm d}\varphi(s_{t+1})$.
In offline RL settings with a fixed dataset $\cD_m=\{s_t^k,a_t^k,r_t^k,s_{t+1}^k\}_{k\in[m]}$, the parameter of the $w$ can be solved via least-squares value iteration algorithm, as
\begin{equation}
\label{eq::appendix_OOD_LSVI}
\widehat w_t = \min_{w\in \RR^d} \sum^m_{k = 1}\bigl(\psi(s^k_t, a^k_t)^\top w - r(s^k_t, a^k_t)- V_{t+1}(s^k_{t+1})\bigr)^2. 
\end{equation}
Then we have an explicit solution to Eq. (\ref{eq::appendix_OOD_LSVI}) written by
\begin{equation}
\label{eq::pf_tilde_w}
\widehat w_t = \Lambda^{-1}_t\sum^m_{k = 1}\psi(s^k_t, a^k_t)y_t^k,\quad
\text{where } \Lambda_t = \sum^m_{k=1}\psi(s^k_t, a^k_t)\psi(s^k_t, a^k_t)^\top
\end{equation}
is the feature covariance matrix of the state-action pairs in the offline dataset, and $y_t^k=r(s^k_t, a^k_t)+ V_{t+1}(s^k_{t+1})$ indicates the Bellman target in regression.

\section{Related Work}

\subsection{Offline RL}

One of the key challenges in offline RL is the distributional shift problem, which arises when the learned policy deviates from the behavior policy. Most offline RL research aims to address it and focuses on avoiding overestimation of the $Q$-values for OOD actions. Specifically, previous works
(\romannumeral 1) restrict the learned policy to the static dataset by explicitly estimating the behavior policy \cite{bcq-2019, brac-2019} or implicitly regularizing policy learning \cite{bear-2019, AWR, awac-2020, td3bc-2021}; (\romannumeral 2) utilize value regularization \cite{cql-2020} and pessimistically update the $Q$-values for OOD actions; (\romannumeral 3) perform in-sample learning from the dataset without querying the value of unseen actions \cite{iql, policy-guided}. These methods can mitigate the distributional shift by constraining the actions to be in-distribution or in-sample, or regularizing the value function. 
However, they often restrict the agent from taking any OOD actions and lack the consideration of whether such actions might be good or bad. As a result, these methods can be overly conservative and may ignore potentially good OOD actions. 

Alternatively, uncertainty-based algorithms like PRBL \cite{PBRL-2022} and SAC-N \cite{EDAC-2021} use an ensemble of $Q$ networks to perform pessimistic policy updates. In particular, our work is related to PBRL, since we both rely on the linear MDP assumption and obtain the lower confidence bound (LCB) penalty by quantifying the uncertainties of Q functions. PBRL utilizes the bootstrapped Q-functions to calculate the uncertainty and approximate a non-parametric Q-posterior, while DRVF provides theoretical guarantees to obtain the true Bayesian posterior. During the Bellman updates of Q functions, PBRL explicitly estimates the pseudo TD targets for OOD samples and requires a tuning parameter to stabilize the learning process. In contrast, DRVF does not modify the TD targets and applies a repulsive regularization term on OOD samples to diversify uncertainty estimation. Compared to PBRL and SAC-N, which still need 10$\sim$50 independent networks for uncertainty quantification, DRVF provide a novel and lightweight architecture, achieving improved parametric efficiency.

There are also model-based offline approaches that aim to learn a pessimistic MDP \cite{mopo-2020, morel-2020}, use ensembles to capture the uncertainty of dynamics models \cite{mopo-2020, morel-2020, bremen}, combine conservative value estimation with dynamic models \cite{combo-2021}, or model the trajectory distribution using a high-capacity Transformer \cite{dt, tt, wang2022bootstrapped}. These methods leverage dynamics models to enhance sample efficiency and expand the data distribution through planning or rollout. Nevertheless, they rely on accurate dynamics models or require additional modules such as Transformers. On the contrary, our method follows a model-free manner, prioritizing computational efficiency. Recently, an adversarial trained method \cite{ATAC-2022} based on a two-player Stackelberg game has shown promising results, albeit at a higher computational cost.

\subsection{Uncertainty Quantification}
Uncertainty quantification is widely used for exploration in the online RL setting \cite{count,sunrise-2021, rvf}, i.e., \emph{optimism in the face of uncertainty}, while its offline counterpart tends to be \emph{pessimism in the face of uncertainty} \cite{pevi-2021}. Because of the limited coverage of offline datasets and the extrapolation error from OOD actions, it is more challenging to obtain accurate uncertainty quantification in an offline setting. 

One approach to estimate predictive uncertainty is to use ensembles of neural networks. This can be achieved through independent initializations \cite{rem, PBRL-2022, EDAC-2021}, different hyperparameters \cite{hyper-ensemble}, varying network depths \cite{depth-ensemble}, or early exits \cite{earlyexit}. However, large numbers of neural networks are required to approximate the posterior and may suffer from model collapse.
Another approach is to use Bayesian neural networks to learn the posterior over the parameters of the model. 
Since computing the exact posterior is often intractable, previous methods have resorted to approximations using Markov chain Monte Carlo \cite{welling2011bayesian}, variational inference \cite{blundell2015weight}, or Monte-Carlo dropout \cite{dropout-2016, hiraoka2022dropout, UWAC-2020}.
Noisy-Net \cite{noisy-net} injects noise to the parameters but is not ensured to approximate the posterior distribution of the parameters. Our method is also related to Hyper-DQN \cite{hyperdqn}, which uses Thompson sampling and hyper-network for exploration, while we use variational inference to estimate the posterior distribution for pessimism.

One major challenge associated with ensemble-based methods is the need to ensure diversity among ensemble members, as the quality and performance of the ensemble are heavily affected by it \cite{d2021repulsive, hiraoka2022dropout}. To address this problem, several approaches have been proposed to improve the diversity of the ensemble without sacrificing individual accuracy. For example, the Random Prior \cite{prior-2018} employs independent prior networks for ensemble members, Hyper-Ensemble \cite{wenzel2020hyperparameter} introduces different hyper-parameters for ensemble members, and Stein variational gradient descent (SVGD) \cite{liu2016stein} incorporates a kernelized repulsive force in the parameter space. Inspired by these methods, we propose a simple repulsive term to regularize the Bayesian posterior. 

\section{Bayesian Uncertainty from Randomized Value Functions}
In this section, we describe how to obtain uncertainty quantification of randomized value functions using a combination of ensembles and BNNs. Firstly, we approximate the Bayesian posterior using ensemble BNNs. Secondly, under the assumption of linear MDP, the uncertainty can be characterized by the standard deviation of the estimated posterior, which is related to the LCB penalty. Finally, we show how to perform pessimistic Bellman updates of the value function based on the estimated uncertainty, which leads to better estimation of the $Q$-values for OOD data.

\subsection{Approximation of the Q-posterior}

In this part, We provide the motivation and foundation of the proposed uncertainty quantification method in linear MDPs \cite{pevi-2021}. As mentioned before, the reward function and transition dynamics are linear in the state-action feature $\psi(s,a)$. Considering that we have a learned representation $\psi(s,a)$, the value function $Q(s,a)=\psi(s,a)^\top w$ can also be linear in $\psi(s,a)$,
where $w$ is the underlying true parameter of the linear $Q$-function. More details on linear MDPs can be found in related work \cite{pevi-2021, lsvi-2020}.

In DRVF, we approximate the posterior of the value function $\mathbb P(\tilde Q \,|\, s, a, \cD_m)$ via non-linear neural networks. Specifically, we consider $\psi(s, a)$ as the output of \emph{hidden layers} of the $Q$-network, and $w$ as the weight of the \emph{last layer}. This allows us to express the value function as $Q(s,a)=\psi(s,a)^{\top} w$.
If we denote the variational distribution of the posterior of $w$ as $q_\theta(w \,|\, \mathcal D_m)$, where $\theta$ represents the parameters, then the true posterior $\mathbb P(w \,|\, \mathcal D_m)$ can be approximated using a variational inference approach \cite{blundell2015weight}. Note that
\begin{equation}
\label{eq:elbo}
\mathbb P(\cD_m)
= \EE_{\mathbb P(w \,|\, \cD_m)} [\mathbb P(\cD_m \,|\, w)]
= \underbrace{\EE_{q_\theta(w \,|\, \cD_m)} [\log \mathbb P(\cD_m \,|\, w)] - \mathrm{KL}(q_\theta (w \,|\, \cD_m) \,\|\, \mathbb P(w))}_\text{ELBO} + \mathrm{KL}(q_\theta(w \,|\, \cD_m) \,\|\, \mathbb P(w \,|\, \cD_m)).
\end{equation}
We denote by $q^*_\theta$ the optimal solution to maximizing the evidence lower bound (ELBO). According to Eq.~\eqref{eq:elbo}, maximizing the ELBO term is equivalent to minimizing the Kullback-Leibler (KL) divergence between the true posterior $\mathbb P(w \,|\, \cD_m)$ and our estimation $q(w \,|\, \cD_m)$, which leads to the desired result $q^*(w \,|\, \cD_m) = \mathbb P(w \,|\, \cD_m)$. We denote by $\tilde Q_w$ the random variable for the estimated state-action value with a corresponding weight $w \sim q^*(w \,|\, \cD_m)$.

\subsubsection{The ELBO learning objective}

To derive the practical learning objective of DRVF based on Eq.~\eqref{eq:elbo}, we construct a new offline dataset $\widehat \cD_m=\{\psi_t^i,y_t^i\}_{i\in[m]}$, where the elements can be obtained based on dataset $\cD_m$ by calculating the state-action feature $\psi(s_t^i,a_t^i)$ and the target-value function $y_t^i=\hT Q(s_t^i,a_t^i)$ by Eq.~\eqref{eq:offline}. In our analysis, we consider $y_t^i$ as a fixed learning target based on the current $Q$-function. As a result, the optimal solution of $q^*_\theta$ in the ELBO becomes
\begin{equation}\label{eq:approx-elbo}
\begin{aligned}
q^*_\theta &= \arg\max \EE_{q_\theta(w \,|\, \widehat \cD_m)} \big[\log \mathbb P\big(\widehat \cD_m \,|\, w\big)\big] - \mathrm{KL}\big(q_\theta (w \,|\, \widehat \cD_m) \,\|\, \mathbb P(w)\big)\\
&= \arg\max \EE_{q_\theta(w \,|\, \widehat \cD_m)}\left[\sum\nolimits_{i\in[m],t\in[T]} \log \mathbb P\big(y_t^i \,|\, \psi_t^i, w\big)\right] - \mathrm{KL}\big(q_\theta (w \,|\, \hD_m) \,\|\, \mathbb P(w)\big)\\
&\approx \arg\max \sum\nolimits_{k\in[n]} \sum\nolimits_{i\in[m],t\in[T]} \log \mathbb P\big(y_t^i \,|\, \psi_t^i, w^{(k)}\big) -  \mathrm{KL}\big(q_\theta (w \,|\, \hD_m) \,\|\, \mathbb P(w)\big),
\end{aligned}
\end{equation}
where the last step uses Monte Carlo sampling to approximate the variational posterior, and $w^{(k)}$ denotes the $k$-th sample drawn from $q_\theta(w \,|\, \widehat \cD_m)$. To optimize the objective using gradient backpropagation, we suppose that the posterior is a diagonal Gaussian distribution and apply the re-parameterization trick to sample several random vectors $\{z^{(k)}\}_{k\in[n]}$ from a standard Gaussian. The posterior weight is then calculated as $\{w^{(k)}=\sigma_w^\top z^{(k)}+\mu_w\}_{k\in[n]}$, where $q_\theta(w)=\cN(\mu_w,\Sigma_w)$ and $\Sigma_w = \sigma_w \sigma^\top_w$. The $Q$-value is obtained by $\{Q^{(k)}=\psi^{\top} w^{(k)}\}_{k\in[n]}$.
\begin{itemize}
    \item For the first term in Eq.~\eqref{eq:approx-elbo}, we simplify the target by using a deterministic response, then the log-likelihood reduces to Bellman residual error between the target $\hT Q^{(k)}$ and $Q^{(k)}$. This error is calculated by taking the average over state-action pairs and posterior samples.
    \item For the second term in Eq.~\eqref{eq:approx-elbo}, the prior function also follows a standard Gaussian distribution. And the KL term is easy to optimize.
\end{itemize}

In practice, we use $\psi(s,a)$ as part of the $Q$-network to extract the feature of state-action pairs. We simultaneously learn the representation $\psi(s,a)$ and the posterior distribution by following the gradients of Eq.~\eqref{eq:approx-elbo}.

\subsection{LCB penalty from the ensemble BNNs}

Given the approximated $Q$-posterior, we can quantify the uncertainty of $Q$-values. In the context of linear MDP, the standard deviation of the estimated posterior distribution is a LCB term and $\xi$-uncertainty quantifier, which guarantees the pessimism for the value function.

\begin{theorem}
\label{thm:uncertainty} 
Under linear MDP assumptions, it holds for the standard deviation of the estimated posterior distribution $\mathbb P(\tilde Q \,|\, s, a, \cD_m)$ that
\begin{equation}
\label{eq:lcb}
\mathrm{Std}(\tilde Q \,|\, s_t, a_t, \cD_m) = \left[\psi(s_t, a_t)^\top \Lambda^{-1}_t \psi(s_t, a_t)\right]^{\nicefrac{1}{2}}
:= \Gamma_t^{\rm lcb}(s_t,a_t),
\end{equation}
where $\Lambda_t = \sum\nolimits_{i \in [m]} \psi(s^i_t, a^i_t) \psi(s^i_t, a^i_t)^\top + \lambda \cdot \mathbf I$, and $\Gamma_t^{\rm lcb}(s_t,a_t)$ is defined as the LCB-term.
\end{theorem}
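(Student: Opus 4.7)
The plan is to reduce the claim to standard Bayesian linear regression and then transport the resulting Gaussian posterior on $w$ through the linear map $w \mapsto \psi(s_t,a_t)^\top w$.

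First, I would exploit the linear MDP structure. Under Eq.~\eqref{eq::pf_def_linearMDP} the value function takes the form $Q(s,a) = \psi(s,a)^\top w$ for some weight $w$, so the posterior $\mathbb{P}(\tilde Q \mid s_t, a_t, \cD_m)$ is the pushforward of the weight posterior $\mathbb{P}(w \mid \widehat\cD_m)$ under the (deterministic) map $w \mapsto \psi(s_t,a_t)^\top w$. In particular, $\mathrm{Std}(\tilde Q \mid s_t,a_t,\cD_m)^2 = \psi(s_t,a_t)^\top \mathrm{Cov}(w \mid \widehat\cD_m) \psi(s_t,a_t)$, so it suffices to show $\mathrm{Cov}(w \mid \widehat\cD_m) = \Lambda_t^{-1}$.

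Second, I would set up the Bayesian regression using the targets already in play: treat $\widehat\cD_m = \{\psi_t^i, y_t^i\}_{i \in [m]}$ with $y_t^i = r(s_t^i,a_t^i) + V_{t+1}(s_{t+1}^i)$ as a fixed response (exactly as justified just before Eq.~\eqref{eq:approx-elbo}). Place an isotropic Gaussian prior $\mathbb{P}(w) = \cN(0, \lambda^{-1}\bI)$ and a Gaussian observation model $\mathbb{P}(y_t^i \mid \psi_t^i, w) = \cN(\psi_t^i{}^\top w, 1)$, consistent with the Gaussian variational family adopted in Section~4.1. Standard conjugate updates then yield
\begin{equation*}
\mathbb{P}(w \mid \widehat\cD_m) = \cN\!\left(\hat w_t,\ \Lambda_t^{-1}\right), \qquad \Lambda_t = \sum\nolimits_{i\in[m]} \psi(s_t^i,a_t^i)\psi(s_t^i,a_t^i)^\top + \lambda \bI,
\end{equation*}
where the MAP mean $\hat w_t$ coincides with the ridge-regularized LSVI solution from Eq.~\eqref{eq::pf_tilde_w} (with ridge term $\lambda \bI$). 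I would verify this by completing the square in the log-posterior $-\tfrac{1}{2}\sum_i (y_t^i - \psi_t^{i\top} w)^2 - \tfrac{\lambda}{2}\|w\|_2^2$, whose Hessian in $w$ is exactly $\Lambda_t$, giving posterior precision $\Lambda_t$.

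Finally, combining the two steps, $\tilde Q(s_t,a_t) = \psi(s_t,a_t)^\top \tilde w$ is an affine function of a Gaussian vector, hence Gaussian with variance $\psi(s_t,a_t)^\top \Lambda_t^{-1} \psi(s_t,a_t)$, so its standard deviation is $[\psi(s_t,a_t)^\top \Lambda_t^{-1} \psi(s_t,a_t)]^{1/2} = \Gamma_t^{\rm lcb}(s_t,a_t)$, as claimed. The only subtle point, and what I would treat carefully rather than a true obstacle, is bookkeeping the Gaussian hyperparameters: the identification of the prior precision and observation-noise variance with the single ridge constant $\lambda$ in $\Lambda_t$, and the fact that treating $y_t^i$ as a fixed target (rather than itself a random function of $\tilde w$) is what allows the posterior covariance to be computed in closed form rather than through a recursive Bellman propagation.
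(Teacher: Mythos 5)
Your proposal matches the paper's own proof essentially step for step: both place a Gaussian prior with precision $\lambda$ on $w$, assume a unit-variance Gaussian observation model with the Bellman targets $y_t^i$ treated as fixed responses, complete the square in the log-posterior to obtain $w \mid \cD_m \sim \cN(\mu_t, \Lambda_t^{-1})$, and push this forward through the linear map $w \mapsto \psi(s_t,a_t)^\top w$ to read off the variance $\psi(s_t,a_t)^\top \Lambda_t^{-1}\psi(s_t,a_t)$. The bookkeeping points you flag (the role of $\lambda$ as the prior precision and the fixed-target convention) are exactly the assumptions the paper makes implicitly, so your argument is correct and follows the same route.
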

Theorem~\ref{thm:uncertainty} establishes the equivalence between the standard deviation of the $Q$-posterior and the LCB-term $\Gamma^{\rm lcb}_t(s_t,a_t)$ in linear MDPs.
The proof is given in Appendix A. The feature covariance matrix $\Lambda_t$ accumulates the state-action features during learning the value function $Q$. The LCB term defined in Eq.~\eqref{eq:lcb} measures the confidence interval of the $Q$-functions with the offline data $\cD_m$. 
In the following, We propose that $\Gamma_t^{\rm lcb}(s,a)$ serves as a $\xi$-uncertainty quantifier \cite{pevi-2021} and in favor of characterizing  the optimality gap of the value function.

\begin{definition}[\textbf{$\xi$-Uncertainty Quantifier \cite{pevi-2021}}]
For all $(s,a)\in{\mathcal{S}}\times\mathcal{A}$, if the inequality 
$|\widehat{\mathcal{T}}V_{t+1}(s,a)-\mathcal{T}V_{t+1}(s,a)|\leq\Gamma_{t}(s,a)$
holds with probability at least $1-\xi$, then $\{\Gamma_{t}\}_{t\in[T]}$ is a $\xi$-Uncertainty Quantifier. $\widehat{\mathcal{T}}$ empirically estimates $\mathcal{T}$ based on the data.
\label{def:uncertainty-quantifier}
\end{definition}

\begin{proposition}
    The LCB term $\Gamma_t^{\rm lcb}(s,a)$ is a $\xi$-uncertainty quantifier.
    \label{xi-uncertainty}
\end{proposition}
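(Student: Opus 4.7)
The plan is to adapt the standard linear-MDP concentration argument from the PEVI framework. The goal is to show that $|\hT V_{t+1}(s,a) - \cT V_{t+1}(s,a)| \leq \Gamma_t^{\rm lcb}(s,a)$ holds with probability at least $1-\xi$, where the dimension- and $\xi$-dependent constants arising from concentration are absorbed into the definition of $\Gamma_t^{\rm lcb}$ (which typically scales with $\sqrt{d\,\log(1/\xi)}$).

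First, I would exploit the linear-MDP structure to express both Bellman targets as linear functionals in $\psi(s,a)$. By Eq.~(5), the true target is $\cT V_{t+1}(s,a) = \psi(s,a)^\top w_t^*$ with $w_t^* = \upsilon + \int V_{t+1}(s')\,\mathrm{d}\varphi(s')$, while the LSVI estimator in Eq.~(8) produces $\hT V_{t+1}(s,a) = \psi(s,a)^\top \widehat w_t$. The Bellman estimation error therefore reduces to a single inner product $\psi(s,a)^\top(\widehat w_t - w_t^*)$. Applying Cauchy--Schwarz in the norm induced by $\Lambda_t$ gives
\begin{equation*}
|\psi(s,a)^\top(\widehat w_t - w_t^*)| \;\leq\; \|\psi(s,a)\|_{\Lambda_t^{-1}} \cdot \|\widehat w_t - w_t^*\|_{\Lambda_t} \;=\; \Gamma_t^{\rm lcb}(s,a) \cdot \|\widehat w_t - w_t^*\|_{\Lambda_t},
\end{equation*}
which reduces the claim to a high-probability bound on the weighted parameter-error norm.

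To bound $\|\widehat w_t - w_t^*\|_{\Lambda_t}$, I would write the regression target as $y_t^k = \psi(s_t^k, a_t^k)^\top w_t^* + \eta_t^k$ with zero-mean noise $\eta_t^k = V_{t+1}(s_{t+1}^k) - \EE_{s'\sim P(\cdot|s_t^k,a_t^k)}[V_{t+1}(s')]$, which is bounded because rewards and value functions are bounded. Substituting into Eq.~(8) decomposes $\widehat w_t - w_t^*$ into a stochastic part $\Lambda_t^{-1}\sum_k \psi_t^k \eta_t^k$ and a ridge-bias part $-\lambda \Lambda_t^{-1} w_t^*$. The stochastic part is controlled by the self-normalized martingale tail inequality of Abbasi-Yadkori et al., while the bias is controlled by the a priori bound $\|w_t^*\|_2 \leq \mathrm{const}\cdot\sqrt{d}$ inherited from the linear-MDP boundedness assumptions ($\|\psi\|_2 \leq 1$ and $r \in [0,1]$).

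The main obstacle is a measurability issue: $V_{t+1}$ is itself constructed by LSVI from $\cD_m$, so the noise $\eta_t^k$ is not independent of the regressors $\psi_t^k$. I would resolve this by the standard $\epsilon$-covering argument over the class of value functions reachable by LSVI (i.e., functions of the form $\min\{\max_a \psi(\cdot,a)^\top w - \beta\|\psi(\cdot,a)\|_{\Lambda^{-1}},\, H\}^+$), showing that this class has polynomially bounded covering numbers, and then taking a union bound over the cover together with the self-normalized inequality. This inflates the confidence radius only by a logarithmic factor, which is absorbed into the implicit constant in $\Gamma_t^{\rm lcb}$. Combining the Cauchy--Schwarz factorization with the high-probability control on $\|\widehat w_t - w_t^*\|_{\Lambda_t}$ then yields the $\xi$-uncertainty quantifier property in the sense of Definition~1.
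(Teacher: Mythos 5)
Your proposal follows essentially the same route as the paper's proof: reduce the Bellman error to $\psi(s,a)^\top(\widehat w_t - w_t^*)$ via the LSVI closed form, then bound it by $\|\psi(s,a)\|_{\Lambda_t^{-1}}$ times a high-probability confidence radius using the standard self-normalized concentration analysis. In fact you supply more detail than the paper does --- the ridge-bias term, the Abbasi-Yadkori inequality, and the covering argument for the data-dependent $V_{t+1}$ are all steps the paper defers to the cited references with ``following the standard analysis,'' so your write-up is a correct and somewhat more complete version of the same argument.
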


Proof for Proposition \ref{xi-uncertainty} is put in the Appendix A, and we refer to \cite{pevi-2021, PBRL-2022} for more details.

\begin{corollary} [\textbf{Suboptimality Gap \cite{pevi-2021}}]
Since $\Gamma_{t}^{\rm lcb}(s_{t},a_{t})$ is a $\xi$-uncertainty quantifier,
the suboptimality gap satisfies
\begin{equation}
    V^{*}(s_{1})-V^{\pi_{1}}(s_{1})\leq\sum^{T}_{t=1}\mathbb{E}_{\pi^{*}}\bigl{[}%
\Gamma_{t}^{\rm lcb}(s_{t},a_{t}){\,|\,}s_{1}\bigr{]}.
\end{equation}
\label{suboptimality}
\end{corollary}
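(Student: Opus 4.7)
The plan is to follow the pessimistic value iteration argument of \cite{pevi-2021}. Write $\hQ_t(s,a) = \hT V_{t+1}(s,a) - \Gamma_t^{\rm lcb}(s,a)$ for the penalized $Q$-estimate, with induced value $\widehat V_t$, and let $\pi$ be greedy with respect to $\hQ_t$. I would decompose the suboptimality gap as
\begin{equation*}
V^{*}(s_1) - V^{\pi}(s_1) = \bigl[V^{*}(s_1) - \widehat V_1(s_1)\bigr] + \bigl[\widehat V_1(s_1) - V^{\pi}(s_1)\bigr],
\end{equation*}
and bound the two pieces separately: the first controls how much pessimism costs us against $\pi^{*}$, and the second reflects the slackness between the penalized estimate and the true return of $\pi$.

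For the second piece, an induction on the stage index shows $\widehat V_1(s_1) \leq V^{\pi}(s_1)$: the single-step difference $\hQ_t - Q^{\pi}_t$ splits as $(\hT V_{t+1} - \cT V_{t+1}) - \Gamma_t^{\rm lcb} + \gamma\,\EE[\widehat V_{t+1} - V^{\pi}_{t+1}]$, whose first two summands are non-positive by Definition~\ref{def:uncertainty-quantifier} and whose third is handled by the inductive hypothesis. For the first piece, a parallel expansion along the $\pi^{*}$-trajectory leaves a $(\cT V^{*}_{t+1} - \hT V_{t+1})$ term together with the $\Gamma_t^{\rm lcb}$ penalty per step. The greedy optimality of $\pi$ (giving $\hQ_t(s,\pi_t^{*}(s)) \leq \hQ_t(s,\pi_t(s))$) cancels the policy-mismatch contribution, and the uncertainty-quantifier inequality once more converts $\cT V^{*}_{t+1} - \hT V_{t+1}$ into a further $\Gamma_t^{\rm lcb}$ contribution. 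Telescoping under the $\pi^{*}$-expectation then produces exactly the stated $\sum_{t=1}^{T} \EE_{\pi^{*}}[\Gamma_t^{\rm lcb}(s_t,a_t)\,|\,s_1]$.

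The delicate step is the bookkeeping in the first piece, where one must simultaneously apply the uncertainty-quantifier inequality in the correct direction, insert the policy-optimality inequality at the right place, and telescope the $(V^{*}_{t+1} - \widehat V_{t+1})$ remainder under a trajectory expectation that stays under $\pi^{*}$ rather than drifting to $\pi$. Since this argument is essentially Lemma 3.1 and Theorem 4.2 of \cite{pevi-2021}, and Proposition~\ref{xi-uncertainty} has already supplied the required quantifier property, the concrete calculation can be deferred to the appendix or directly to that reference.
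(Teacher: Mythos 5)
Your proposal is correct and takes essentially the same route as the paper, whose entire proof of this corollary is the citation ``see \cite{pevi-2021}''; your sketch is exactly the Lemma~3.1/Theorem~4.2 decomposition of that reference (pessimism of the penalized estimate via induction, greedy-policy cancellation of the mismatch term, telescoping under $\pi^{*}$). One small caveat: the standard bookkeeping gives $\cT\widehat V_{t+1}-\widehat Q_t\in[0,2\Gamma_t^{\rm lcb}]$ per step, so the telescoped bound carries a factor of $2$ in front of $\sum_{t}\EE_{\pi^{*}}[\Gamma_t^{\rm lcb}]$ as in the cited theorem, rather than ``exactly'' the constant-free form stated here.
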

This corollary implies that the pessimism of the value function holds with probability at least
$1-\xi$ as long as
$\Gamma_{t}^{\rm lcb}(s_{t},a_{t})$ 
are $\xi$-uncertainty quantifiers. We put the related proof in Appendix A. We remark that while the above analysis is based on the linear MDP assumption, our method can be readily applied in high-dimensional space and deep neural networks.

\subsection{Pessimistic Q-learning}

Based on the LCB term $\Gamma_t^{\rm lcb}(s,a)$, the pessimistic value function $Q_{\rm lcb}(s,a)=Q(s,a)-\Gamma_t^{\rm lcb}(s,a)$ can be applied in the Bellman updates and policy updates. During the Bellman updates, updating the value function with uncertainty penalty is known to be information-theoretically optimal in linear MDPs with finite horizon \cite{pevi-2021}. For policy updates, optimizing the LCB of the value function also restrict the policy from discarding potentially beneficial OOD actions. DRVF can determine whether an OOD action is `good' based on its LCB and select the better behavior. If the LCB of an OOD action is higher than that of in-distribution actions, it still has the potential to be selected.

There are two methods to obtain the pessimistic value function. 
\begin{enumerate}
    \item We can calculate the standard deviation of the posterior as $\Gamma_t(s_t,a_t)=\sqrt{\psi(s_t,a_t)^{\top} \Sigma_w \psi(s_t,a_t)}$ to serve as an estimation of $\Gamma_t^{\rm lcb}$ in Eq.~\eqref{eq:lcb}. The pessimistic value function is then obtained by applying the LCB as a penalty, i.e., $Q^1_{\rm lcb}(s_t,a_t)=\bar{Q}(s_t,a_t)-\alpha\Gamma_t(s_t,a_t)$, where $\bar{Q}(s_t,a_t)=\psi(s_t,a_t)^{\top} \mu_w$ is the mean $Q$-value.
    \item Alternatively, we can sample several $Q$-values from the posterior distribution, and obtain the pessimistic $Q$-value by taking the minimal value of the $Q$-samples as $Q^2_{\rm lcb}(s_t,a_t)=\min_{k\in[n]} Q^{(k)}(s_t,a_t)$.
\end{enumerate}

We remark that $Q^1_{\rm lcb}$ and $Q^2_{\rm lcb}$ are approximately equivalent with an appropriate $\alpha$. Since the realization $Q^{(k)}$ has a linear mapping to $w^{(k)}$, $Q^{(k)}$ also follows a Gaussian distribution. According to prior works \cite{EDAC-2021}, we have
\begin{equation}\label{eq:gaussian-lcb}
    \EE\big[\min\nolimits_{k\in[n]} Q^{(k)}(s,a)\big]
    \approx \bar{Q}(s,a) - \alpha \cdot {\rm Std}\big(\{Q^{(k)}(s,a)\}_{k\in[n]}\big) \approx \bar{Q}(s,a) - \alpha \Gamma_t(s,a),
\end{equation}
where $\alpha=\Phi^{-1}(\frac{N-\nicefrac{\pi}{8}}{N-\nicefrac{\pi}{4}+1})$, and $\Phi(\cdot)$ is the cumulative distribution function of the standard Gaussian. In practice, we use $Q^2_{\rm lcb}$ as the pessimistic value function since (\romannumeral 1) it is easy to combine with ensemble methods by sampling multiple values from both the ensemble and BNNs;
and (\romannumeral 2) the sampling method can be effectively calculated via matrix manipulation by using multiple noise vectors.

\begin{figure}[!tbp]
\centering
\includegraphics[width=\linewidth]{./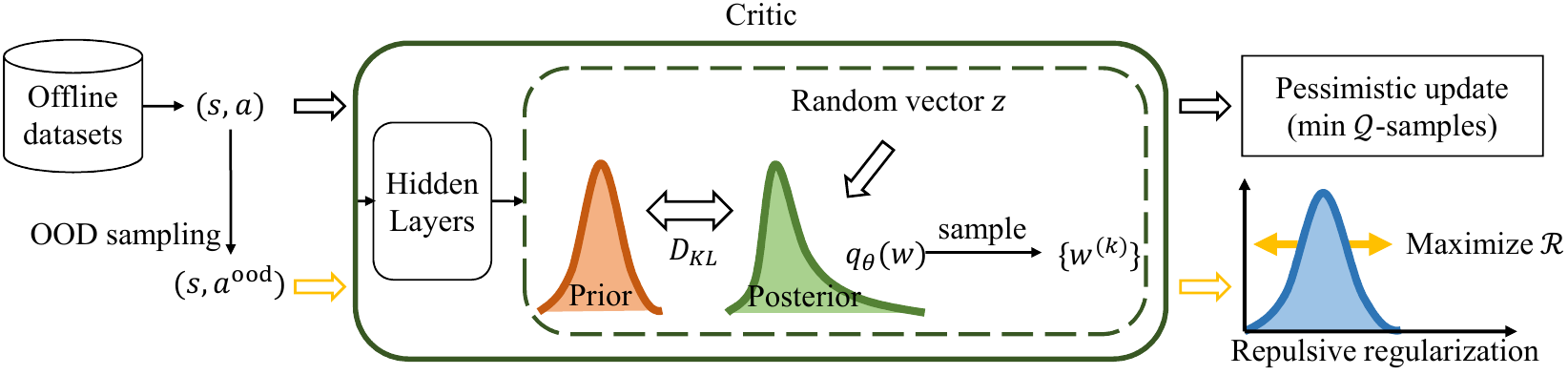}
\caption{The architecture of DRVF, where we utilize approximate Bayesian inference in the last layer of the critic network. We perform OOD sampling to obtain $(s,a^{\rm ood})$ pairs based on $(s,a)\sim \cD_m$. We input $(s,a)$ and $(s,a^{\rm ood})$ pairs to obtain the $Q$-samples and the repulsive term, respectively. Then $(s,a)$ is used for the pessimistic update of the $Q$-values, and $(s,a^{\rm ood})$ provides regularization.}
\label{fig:DRVF}
\end{figure}

The architecture of BNN in our critic network and the pessimistic update are presented in Fig.~\ref{fig:DRVF}.
We use $M$ ensemble BNNs as critic networks; then the pessimistic learning target for the $j$-th $Q$-network is given by:
\begin{equation}\label{eq:parc-lcb}
    \hT^{\rm lcb} Q_{w_j}(s,a) 
    = r(s,a) + \gamma\hat{\EE}_{\substack{s'\sim P(\cdot|s,a), a'\sim \pi_\psi(\cdot|s') 
    w_j^{(k-)}\sim q_\theta(w_j^-|\mathcal{D}_m)}}
    \Big[ \min_{\substack{{j\in[M]}\\ k\in[n]}}Q^{(k)}_{w_j^-}(s',a')-\beta \log \pi_\psi (a'|s')\Big],
\end{equation}
where $w_j$ and $w^{-}_j$ denote the Bayesian parameters for $Q$-networks and target networks and $\pi_\psi$ represents the policy network. In Eq.~\eqref{eq:parc-lcb}, we sample $n$ $Q$-values from each BNN and take the minimum value among all the $n\times M$ sampled $Q$-values of the $M$ BNNs to compute the target. 

\section{Repulsive Regularization for Diversity}
The estimated Bayesian posterior allows us to sample infinite $Q$-values from the posterior for a single $(s,a)$ pair. This sampling process is approximately equivalent to generating infinite $Q$-ensembles \cite{pearce2020uncertainty}. However, adding more ensemble members may lead to performance saturation, where ensembles tend to output similar estimates, and affect the quality of the uncertainty quantification \cite{d2021repulsive}. To maintain the diversity among the samples from ensemble BNNs, we propose a simple repulsive term for the Bayesian posterior. In this section, we first introduce the intuition for the repulsive term by considering its effect on the uncertainty quantification for OOD data. We then present the specific regularization term and ways to generate OOD data. Finally, we summarize the algorithm. 

\begin{figure}[ht]
    \centering
    \subfigure[]{\includegraphics[width=0.49\linewidth,align=t]{./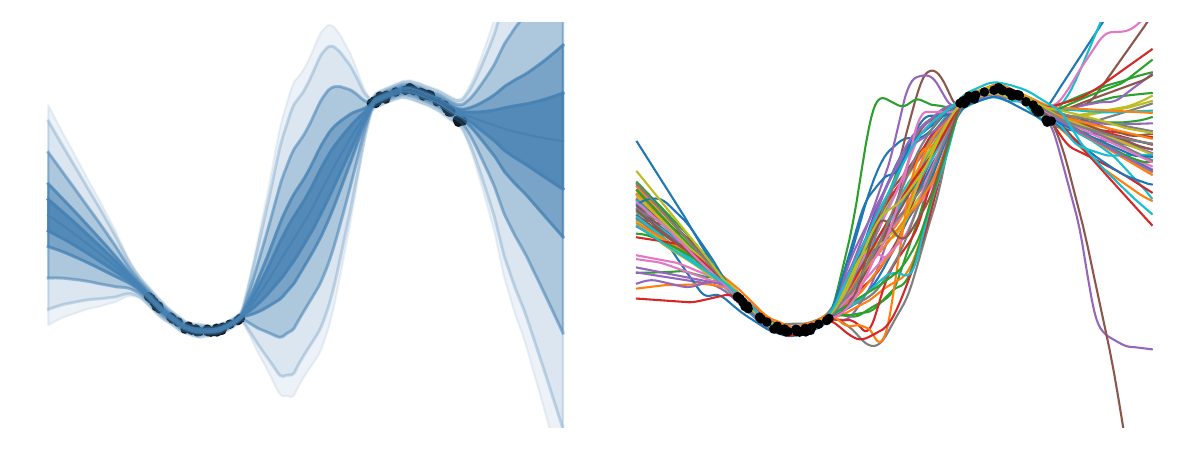}}
    \subfigure[]{\includegraphics[width=0.49\linewidth,align=t]{./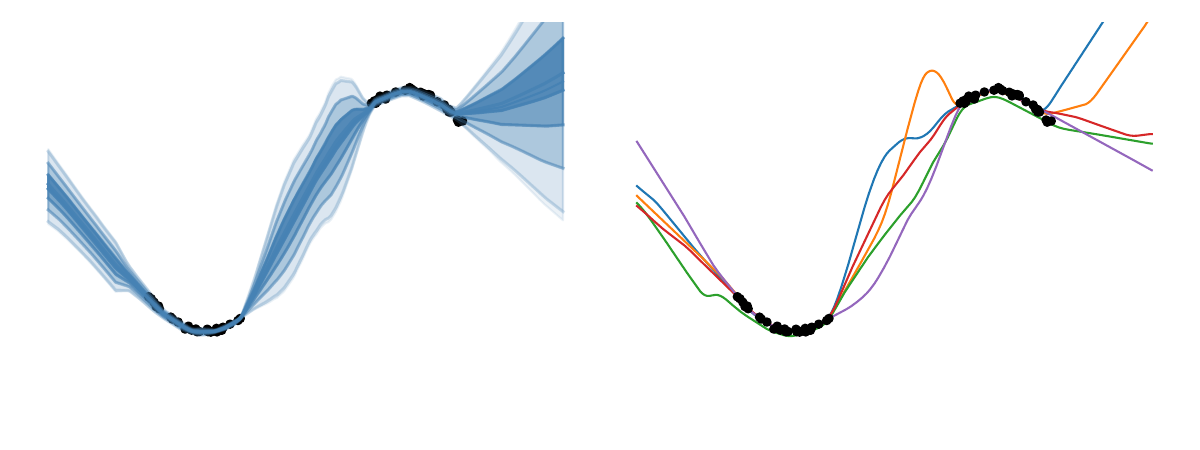}}
    \caption{Visualization for the intuition behind the repulsive regularization term. Blank dots represent data points, while colorful lines denote possible estimates of the Q-function. The shaded area denotes the uncertainty of ensemble predictions. In panel (a), uncertainty quantification and estimates are obtained from a large number of ensembles, which provides reliable uncertainty estimation. In panel (b), the uncertainty and possible estimates come from a small number of ensembles, which fails to properly account for the uncertainty. In DRVF, we explicitly maximize the standard deviation of the samples from the ensemble BNNs to obtain reliable uncertainty quantification with fewer ensembles.}
    \label{fig:max-ood}
\end{figure}

\subsection{Intuition Behind the Repulsive Term}

Our proposed repulsive regularization term relies on the idea that \emph{the uncertainty should be lower for in-distribution data and higher for OOD data.} We visualize this intuition by simplifying the Q-function as a one-dimensional function with respect to actions. As illustrated in Fig. \ref{fig:max-ood}, we expect smaller differences between ensemble predictions for in-distribution data and larger differences for OOD data to capture possible estimates for OOD data \cite{agree-to-disagree}. Moreover, the comparison between Figs. \ref{fig:max-ood} (a) and (b) highlights that more ensembles offer a better characterization of the uncertainty. 

Fig. \ref{fig:max-ood} also implies an intuitive depiction of the repulsive regularization term, designed to enhance the diversity of value estimations. We speculate that accurate characterization of the confidence bound of value estimations can potentially allow us to model the entire distribution of Q values for OOD data using a small number of ensembles. This can be achieved by encouraging ensemble members to be distinct from one another, thereby enabling us to estimate the confidence bound of value estimations even with a limited number of ensembles.  As a result, the LCB of the value function can be effectively characterized, which subsequently plays a crucial role in policy optimization.

To quantify the uncertainty with a reduced number of ensembles, we explicitly maximize the differences of ensemble predictions of Q-values for OOD actions, for example, the standard deviations of Q-values from the ensemble BNNs. We suggest that such regularization can encourage the ensemble Q-functions to provide different estimates for OOD data. Consequently, fewer ensembles are required to generate a reasonable uncertainty quantification, leading to improved computational efficiency.

\subsection{Repulsive term for ensemble BNNs}

In DRVF, we explicitly increase the diversity of different $Q$-samples to prevent the posterior from collapsing into the same solution. The repulsive regularization term $\cR(\cdot)$ can be applied in either the weight space (i.e., $\cR\big(\{w^{(k)}\}\big)$) or the function space (i.e., $\cR\big(\{Q^{(k)}\}\big)$). However, since the weight space has much higher dimensions, calculating the regularization in the function space is more efficient.
We define the repulsive term as
\begin{equation}\label{eq:repulsive}
\cR\big(Q(s, a^{\rm ood})\big) = {\rm Std}\big(\{Q^{(k)}(s, a^{\rm ood})\}_{k\in[n]}\big),
\end{equation}
where the OOD action $a^{\rm ood}$ is sampled from the learned policy $\pi(\cdot|s)$, $s$ is sampled from the dataset, and $Q^{(k)}$ is sampled from the posterior. By maximizing $\EE_{s,a^{\rm ood}\sim \pi}\big[\cR\big(Q(s, a^{\rm ood})\big)\big]$, we explicitly maximize the disagreement among $Q$-samples for OOD actions, which enhances the reliability of the uncertainty quantification. We remark that we do not enforce regularization for $(s,a)$ pairs sampled from the offline dataset, as the uncertainty tends to be small for in-distribution samples. The repulsive regularization is illustrated in Fig.~\ref{fig:DRVF}, where the offline data and OOD data are used for pessimistic Bellman updates and repulsive regularization, respectively.

In DRVF, we combine the proposed Bayesian posterior and the ensemble in a unified architecture using $M$ BNNs as an ensemble $\{Q^{(k)}_{j}\}_{k\in[n], j\in[M]}$. The repulsive regularization is calculated by sampling from both the ensemble (i.e., $j\in[M]$) and the Bayesian posterior (i.e., $k\in[n]$). As a result, DRVF combines ensemble, BNN, and repulsion to obtain a posterior. Compared to the previous uncertainty-based methods \cite{EDAC-2021,PBRL-2022}, our method is more efficient as it requires much fewer ensemble members (i.e., 2$\sim$5).

\subsection{Ways to Obtain OOD Actions}

We further discuss the way to generate OOD actions. 
In general, we can roughly regard any actions that are far from behavior policies as OOD actions. Since the behavior policy is unknown, we need to generate OOD actions by sampling from some distributions that are different from the behavior policy. Two ways to generate OOD actions can be considered: randomly sampling from the action space and sampling from the learned policy. 

As mentioned before, we obtain OOD actions by sampling from the learned policy, which is consistent with the previous work PBRL.
In Eq. \ref{eq:offline}, the value of the state-action pair $Q(s,a)$ is updated by regressing the target $\mathbb{E}[r+\gamma Q(s',a')]$. However, $Q(s',a')$ is often inaccurate in the offline setting since $a'$ can be OOD actions chosen by the learned policy. We suggest that sampling OOD actions from the learned policy leads to better pessimism for the neighbor field of the learned policy. On the other hand, randomly sampling from the action space is insufficient to enforce pessimism. Detailed comparisons, including empirical results, are provided in Section E.2.

\subsection{Learning with Bayesian Uncertainty}

We now introduce the practical learning algorithm of DRVF. Given the prior function and repulsive term, the loss function for training the $j$-th $Q$-network is
\begin{equation}\label{eq:critic}
\begin{aligned}
\cL_{\rm critic}^j \,=\, 
\eta_q \cdot \EE_{s,a,r,s'\sim \cD_m}\Big[\Big(Q_{w_j}(s,a)-\hT^{\rm lcb} Q_{w_j}(s,a)\Big)^2 + \mathrm{KL}\big(q_\theta (w_j) \,\|\, \mathbb P(w)\big)\Big] \\
-\eta_{\rm ood} \cdot \EE_{s\sim \cD_m,a^{\rm ood}\sim \pi}\,[\,\cR\big(Q_{w_j}(s, a^{\rm ood})\big)\,]\,,
\end{aligned}
\end{equation}
where $\hT^{\rm lcb} Q_{w_j}$ is defined in Eq.~\eqref{eq:parc-lcb}, $\eta_q$ and $\eta_{\rm ood}$ are constant factors, and $\cR(\cdot)$ is the repulsive term defined in Eq.~\eqref{eq:repulsive}. The first two terms in $\cL_{\rm critic}^j$ recover the variational ELBO defined in Eq.~\eqref{eq:approx-elbo}, where the first term maximizes the log-likelihood of the target, and the second term regularizes the posterior distribution. The last repulsive term maximizes the diversity of samples for OOD actions. The loss function is used to train the parameter $\theta$ of the
Gaussian distribution $q_\theta(w_j)$ and the representation $\psi(s,a)$.

Based on the value function, the policy network is updated by maximizing the minimal sampled value from the Bayesian posterior and ensemble, as
\begin{equation}
\label{eq:Lp}
\begin{aligned}
    \mathcal{L}_{\rm actor} 
    &= - \EE_{s\sim\cD_m, a\sim\pi_\psi(\cdot|s),
    w_j^{(k)}\sim q_\theta(w_j|\cD_m)}
\Big[\min\limits_{k\in[n],j\in[M]}Q^{(k)}_{w_j}(s,a) -\beta\log\pi_\psi(a|s)\Big],
\end{aligned}
\end{equation}
which includes an SAC-style entropy regularization.
We summarize the training process in Algorithm ~\ref{alg}.

\begin{algorithm}[t]
\caption{Diverse randomized value functions for offline RL (DRVF)}
\label{alg}
\begin{algorithmic}[1]
\REQUIRE Offline dataset $\cD_m$, policy network $\pi_{\psi}$, 
$Q$-networks $\{Q_{w_j}\}_{j\in[M]}$, and target $Q$-networks $\{Q_{w^-_j}\}_{j\in[M]}$. 
\STATE Initialize the parameters, including $\psi$, $\{w_j\}_{j\in[M]}$, and $\{w^-_j\}_{j\in[M]}$.
\WHILE{\emph{not coverage}}
\STATE Sample transitions $\{(s,a,r,s')\}$ from $\cD_m$ and construct the OOD data as $\{(s,a^{\rm ood})\}$.
\STATE Calculate the pessimistic target $Q$-values by sampling from the posterior via Eq. (\ref{eq:parc-lcb}).
\STATE Minimize the critic loss $\mathcal{L}_{\rm critic}$ in Eq.~\eqref{eq:critic} and update the parameters.
\STATE Train the pessimistic policy by minimizing the actor loss $\mathcal{L}_{\rm actor}$ in Eq. (\ref{eq:Lp}).
\STATE Update target $Q$-networks by $\{w^-_j\}'\leftarrow \rho \{w^-_j\}' + (1-\rho)\{w_j\}$
\ENDWHILE
\end{algorithmic}
\end{algorithm}

\begin{proposition}[\textbf{Informal}]
Optimizing the loss function in Eq. (\ref{eq:critic}) with the repulsive term can obtain the true Bayesian posterior of the value function.
\label{gradient-flow}
\end{proposition}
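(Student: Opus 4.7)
The plan is to interpret the loss function in Eq.~\eqref{eq:critic} as inducing a gradient flow in the space of probability distributions over $w_j$, and to identify the stationary point of that flow with the true posterior $\mathbb P(w_j \,|\, \cD_m)$. Observe first that the two bracketed terms in $\cL_{\rm critic}^j$ together reproduce the negative ELBO from Eq.~\eqref{eq:approx-elbo}. By the decomposition in Eq.~\eqref{eq:elbo}, minimizing this quantity over the variational family — or, in the nonparametric limit, over all distributions on $w_j$ — is equivalent to minimizing $\mathrm{KL}\bigl(q_\theta(w_j\,|\,\cD_m)\,\|\,\mathbb P(w_j\,|\,\cD_m)\bigr)$, whose unique minimizer is the true posterior. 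Hence, absent the repulsive term, the stationary point is already the desired posterior; the goal is then to show that adding the repulsive term does not bias this fixed point.

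Next, I would view the $n\cdot M$ sampled $Q$-values as particles in function space whose empirical distribution $\hat q$ approximates the variational distribution induced on $Q$ by $q_\theta$. Writing the functional gradient of $\cL_{\rm critic}^j$ with respect to $\hat q$ splits into an attractive term (pulling particles toward high-posterior regions through the Bellman residual and the prior KL) and a repulsive term coming from $\cR(Q(s, a^{\rm ood}))$. I would then show, following the argument of \cite{d2021repulsive} and the SVGD analysis of \cite{liu2016stein}, that this combined flow coincides with a kernelized Stein variational gradient flow, where the standard deviation in Eq.~\eqref{eq:repulsive} plays the role of the repulsive kernel evaluated on OOD inputs. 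Stein's identity then guarantees that the flow is stationary precisely when the particle distribution matches the target posterior, so the repulsive force vanishes in expectation at the optimum rather than perturbing it.

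The main obstacle, and the one I would devote the most care to, is showing rigorously that the OOD-localized repulsive term is consistent with (rather than orthogonal to) the posterior-matching objective. In particular, one must argue that at the true posterior $\mathbb P(w_j\,|\,\cD_m)$, the induced distribution over $Q$-values on OOD state-action pairs is already maximally dispersed in the sense measured by $\cR$, so the gradient of the repulsive term and the gradient of the ELBO's KL term cancel in expectation. A clean route is to Taylor-expand $\cR$ around the posterior mean, express its functional derivative as a Stein operator acting on a kernel, and then match weights by choosing $\eta_{\rm ood}$ proportional to $\eta_q$ times the appropriate kernel bandwidth — mirroring the balance condition in SVGD. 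Convergence to the true posterior in the joint infinite-particle and infinite-time limit would then follow from standard results on Wasserstein gradient flows of $\mathrm{KL}$ with Stein-kernel repulsion, with finite-particle, finite-ensemble behavior justified as an approximation whose bias decays with $n\cdot M$.
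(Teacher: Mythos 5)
Your overall route is the same family as the paper's: both treat the sampled $Q$-values as interacting particles and appeal to the Wasserstein gradient flow of $\mathrm{KL}\bigl(p(\tilde Q)\,\|\,p(Q)\bigr)$ following \cite{d2021repulsive}, with the ELBO terms supplying the attractive drift toward the posterior. The paper's concrete mechanism, however, is different from yours in the one step you flag as the hard part. It first proves a small lemma identifying the repulsive term with a kernel: ${\rm Var}\bigl(\{Q^{(i)}\}\bigr)=\tfrac{1}{n^2}\sum_{i,j}\tfrac{(Q^{(i)}-Q^{(j)})^2}{2}=-\tfrac{1}{2n^2\gamma}\sum_{i,j}\log k(Q^{(i)},Q^{(j)})$ for an RBF kernel $k$. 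It then writes the discretized flow
\begin{equation*}
Q^{i}_{t+1}=Q^{i}_{t}+\epsilon_t\Bigl(\nabla_Q\log p(Q^{i}_t)-\frac{\sum_{j}\nabla_{Q^{(i)}_t}k(Q^{(i)}_t,Q^{(j)}_t)}{\sum_{j}k(Q^{(i)}_t,Q^{(j)}_t)}\Bigr),
\end{equation*}
where the second term is a kernel-density estimate of $\nabla\log p(\tilde Q)$, i.e.\ the \emph{entropy} part of the Wasserstein gradient, and argues that the gradient of $\cR$ is proportional to exactly that term while the ELBO supplies $\nabla\log p(Q)$.

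The genuine gap in your plan is the claim that at the true posterior the $Q$-values on OOD inputs are ``already maximally dispersed in the sense measured by $\cR$,'' so that the repulsive gradient vanishes (or cancels the KL gradient) in expectation. That is not how the mechanism works: the repulsive force approximates $-\nabla\log p(\tilde Q)$, which at stationarity equals $-\nabla\log p(Q)$, not zero --- the posterior is the point where attraction and repulsion \emph{balance pointwise}, not a maximizer of the dispersion functional. Relatedly, your SVGD/Stein-identity framing does not match the additive structure of Eq.~\eqref{eq:critic}: the Stein update $\EE_{x'\sim q}[k(x,x')\nabla\log p(x')+\nabla_{x'}k(x,x')]$ couples the attractive term to the kernel weights $k(x,x')$, whereas the loss simply adds $-\eta_{\rm ood}\cdot\cR$ to an uncoupled ELBO, which is the KDE-gradient-flow form rather than the SVGD form. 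If you replace the ``vanishing/cancellation at the optimum'' step with the identification of $\nabla\cR$ as the KDE estimate of the entropy gradient, your argument aligns with the paper's; it is worth noting that both versions remain informal, since the proportionality constants, the restriction of the repulsion to OOD inputs only, and the choice of $\eta_{\rm ood}$ relative to $\eta_q$ are not controlled in either argument.
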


This formal version of Proposition \ref{gradient-flow} is shown in Appendix B, where we establish the equivalence between the repulsive term and an RBF kernel and relate the optimization of Eq. (\ref{eq:critic}) to the gradient flow in Wasserstein space.

\section{Experiments}
In this section, we perform multiple experiments and comparisons across various setups to explore the following questions: 1) Does our method exhibit better performance than other baseline methods in different setups, including continuous control tasks and navigation tasks? 2) Can our method obtain reasonable uncertainty estimation for the value function? 3) What is the computational or parametric efficiency of DRVF? 4) Which parts or factors are critical in our method? 
We evaluate our algorithm using the standard D4RL benchmarks.

\begin{table*}[!t]
\caption{Performance of baseline algorithms and our proposed method on Gym Mujoco tasks. The two highest scores are in bold. The abbreviations `r', `m', `m-r', `m-e', `e' correspond to random, medium, medium-replay, medium-expert, and expert datasets. The subscripts $\heartsuit$, $\diamondsuit$, and $\clubsuit$ indicate model-free, model-based, and uncertainty-based methods.}
\resizebox{\textwidth}{!}{
\begin{tabular}{lrrrrrrrrrrr}
\toprule
Environments & CQL$_\heartsuit$ & TD3-BC$_\heartsuit$ & IQL$_\heartsuit$ & MOPO$_\diamondsuit$ & COMBO$_\diamondsuit$ & ATAC$_\heartsuit$ & UWAC$_{\heartsuit \clubsuit}$ & PBRL$_{\heartsuit \clubsuit}$ & EDAC$_{\heartsuit \clubsuit}$ & SAC-N$_{\heartsuit \clubsuit}$ & DRVF$_{\heartsuit \clubsuit}$ \\ 
\midrule
halfcheetah-r & 17.5$\pm$1.7 & 10.8±1.6 & 12.6±2.1 & \textbf{35.9±2.9} & 38.8$\pm$3.7 & 4.8 & 2.3$\pm$ 0.0 & 13.1±1.2 & 27.7$\pm$1.0 & 28.0$\pm$0.9 & 30.9$\pm$1.1 \\
halfcheetah-m & 47.0±0.5 & 48.3±0.3 & 47.3±0.3 & \textbf{73.1±2.4} & 54.2$\pm$1.5 & 54.3 & 42.2$\pm$0.4 & 58.2±1.5 & 66.0$\pm$1.5 & 67.5$\pm$1.2 & \textbf{69.2$\pm$2.7} \\
halfcheetah-m-r & 45.5±0.8 & 44.6±0.3 & 44.4±0.5 & \textbf{69.2±1.1} & 55.1$\pm$1.0 & 49.5 & 35.9$\pm$3.7 & 49.5±0.8 & 62.3$\pm$1.4 & 63.9$\pm$0.8 & \textbf{66.7$\pm$2.0} \\
halfcheetah-m-e & 75.6±28.7 & 90.1±4.4 & 89.3±3.0 & 70.3±21.9 & 90.0$\pm$5.6 & 95.5 & 42.7$\pm$0.3 & 93.6±2.3 & \textbf{107.7$\pm$2.1} & \textbf{107.1$\pm$2.0} & 104.2$\pm$2.4 \\
halfcheetah-e & 96.3±1.4 & 96.7±0.9 & 95.1±0.3 & 81.3±21.8 & - & 94.0 & 92.9$\pm$0.6 & 96.2±2.3 & \textbf{106.6$\pm$1.2} & 105.2$\pm$2.6 & \textbf{107.1$\pm$2.6} \\
walker2d-r & 5.1±1.5 & 0.8±0.7 & 6.0±1.1 & 4.2±5.7 & 7.0$\pm$3.6 & 8.0 & 2.0$\pm$0.4 & 8.8±6.3 & \textbf{17.6$\pm$9.7} & \textbf{21.7$\pm$0.0} & 7.4$\pm$5.7 \\
walker2d-m & 73.3±20.0 & 85.0±1.0 & 75.6±2.3 & 41.2±30.8 & 81.9$\pm$2.8 & 91.0 & 75.4$\pm$3.0 & 90.3±1.2 & \textbf{93.7$\pm$1.3} & 87.9$\pm$0.2 & \textbf{95.1$\pm$2.7} \\
walker2d-m-r & 81.8±3.0 & 80.6±9.1 & 67.0±12.2 & 73.7±9.4 & 56.0$\pm$8.6 & \textbf{94.1} & 23.6$\pm$6.9 & 86.2±3.4 & 85.7$\pm$2.2 & 78.7$\pm$0.7 & \textbf{96.1$\pm$2.3}\\
walker2d-m-e & 107.9±1.8 & 110.4±0.6 & 109.4±0.7 & 77.4±27.9 & 103.3$\pm$5.6 &  \textbf{116.3} & 96.5$\pm$9.1 & 109.8±0.2 & 113.5$\pm$1.1 & \textbf{116.7$\pm$0.4} & 112.4$\pm$1.1 \\
walker2d-e & 108.6±0.6 & 110.2±0.5 & 109.8±0.1 & 62.4±3.2 & - & 108.2 & 108.4$\pm$0.4 & 108.8±0.2 & \textbf{115.4$\pm$1.5} & 107.4$\pm$2.4 & \textbf{112.5$\pm$0.5} \\
hopper-r & 7.9±0.4 & 9.0±0.3 & 8.1±0.9 & 16.7±12.2 & 17.9$\pm$1.4 &  \textbf{31.8} & 2.7$\pm$0.3 & \textbf{31.6±0.3} & 12.6$\pm$10.5 & 31.3$\pm$0.0 & 27.9$\pm$7.8 \\
hopper-m & 53.0±31.8 & 59.0±4.0 & 67.9±6.4 & 38.3±34.9 & 97.2$\pm$2.2 & \textbf{102.8} & 50.9$\pm$4.4 & 81.6±14.5 & 101.9$\pm$0.4 & 100.3$\pm$0.3 & \textbf{102.0$\pm$1.2} \\
hopper-m-r & 88.7±14.4 & 52.3±23.5 & 94.8±4.9 & 32.7±9.4 & 89.5$\pm$1.8 & \textbf{102.8} & 25.3$\pm$1.7 & 100.7±0.4 & 100.8$\pm$0.1 & 101.8$\pm$0.5 & \textbf{102.9$\pm$1.1} \\
hopper-m-e & 105.6±14.4 & 102.9±7.0 & 90.8±14.8 & 60.6±32.5 & 111.1$\pm$2.9 & \textbf{112.6} & 44.9$\pm$8.1 & 111.2±0.7 & 110.6$\pm$0.2 & 110.1$\pm$0.3 & \textbf{112.3$\pm$1.8} \\
hopper-e & 96.5±31.4 & 108.9±2.5 & 109.1±3.2 & 62.5±29.0 & - & \textbf{111.5} & 110.5$\pm$0.5 & 110.4±0.3 & 109.8$\pm$0.1 & 110.3$\pm$0.3 & \textbf{111.4$\pm$1.2} \\
\midrule
Average & 67.4±10.2 & 67.3±3.8 & 68.5±3.5 & 53.3±16.3 & 66.8$\pm$3.4 & 78.5 & 50.4$\pm$2.7 & 76.7±2.4 & \textbf{82.9$\pm$2.2} & 82.5$\pm$0.8 & \textbf{83.9$\pm$2.4} \\ 
\bottomrule
\end{tabular}}
\label{table1}
\end{table*}

\subsection{Performance on D4RL benchmarks}
\subsubsection{Baselines}

To verify the feasibility and superiority of our algorithm, we compare DRVF against different types of offline RL methods: 1) model-free value-based methods, including CQL \cite{cql-2020}, TD3-BC \cite{td3bc-2021}, and IQL \cite{iql}; 2) model-based algorithms such as MOPO \cite{mopo-2020} and COMBO \cite{combo-2021}, which use ensembles to learn transition dynamics; 3) ATAC \cite{ATAC-2022}, which exhibits promising results by framing offline RL as a two-player game;
4) uncertainty-based methods, including UWAC \cite{UWAC-2020}, PBRL \cite{PBRL-2022}, EDAC \cite{EDAC-2021}, and SAC-N. In particular, we compare DRVF with uncertainty-based methods to show that DRVF obtains reliable uncertainty estimation and achieves superior performance with fewer ensemble networks.

\subsubsection{Results in Gym-Mujoco tasks}

We evaluate the performance of all methods on Gym Mujoco tasks, which include three environments: halfcheetah, walker2d, and hopper. Each environment consists of five datasets (random, medium, medium-replay, medium-expert, and expert) generated by different behavior policies.
We report the normalized scores for each task and the average scores in Gym Mujoco tasks in Table \ref{table1}.

The experimental results in Table \ref{table1} show that DRVF outperforms CQL, TD3-BC, IQL, MOPO, and COMBO by a wide margin in most cases and showcases better performance than ATAC in the halfcheetah and walker2d environments.
When compared to uncertainty-based methods such as UWAC, PBRL, or SAC-N, DRVF performs better across most tasks while providing reliable uncertainty quantification.
Although DRVF is slightly worse than EDAC in halfcheetah/walker2d-medium-expert and walker2d-expert tasks, it obtains reasonable scores with much fewer ensembles, which will be discussed in Section \ref{computational-efficiency}. We provide more comparisons in Appendix D.

\begin{figure*}[t]
    \centering
    \subfigure[halfcheetah]{\includegraphics[width=0.32\linewidth]{./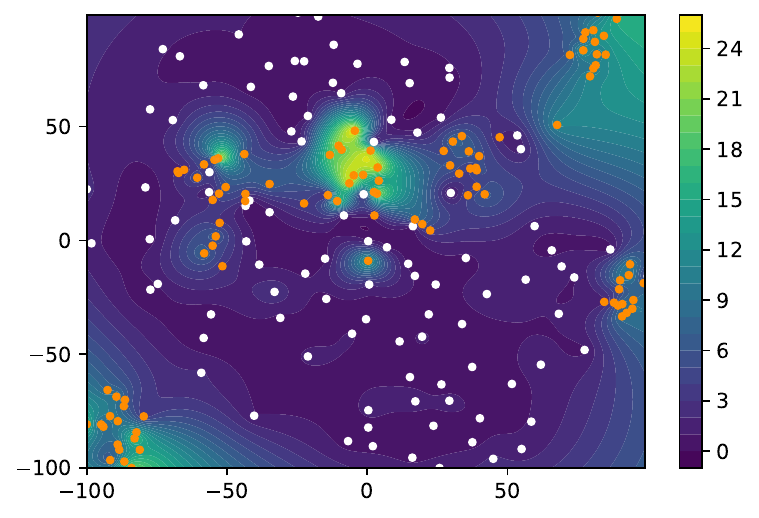}}
    \subfigure[walker2d]{\includegraphics[width=0.32\linewidth]{./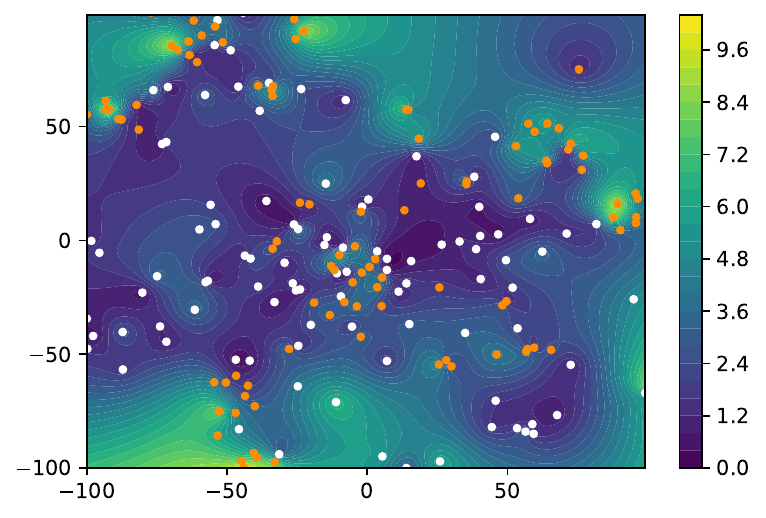}}
    \subfigure[hopper]{\includegraphics[width=0.32\linewidth]{./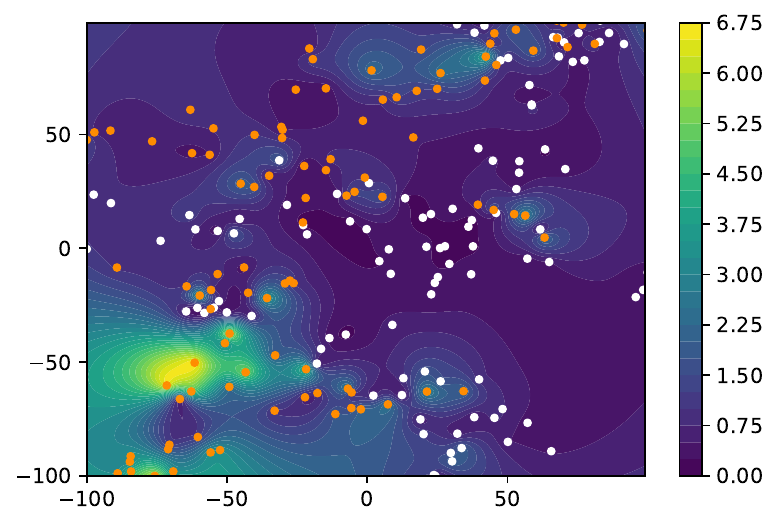}}

    \subfigure[halfcheetah]{\includegraphics[width=0.32\linewidth]{./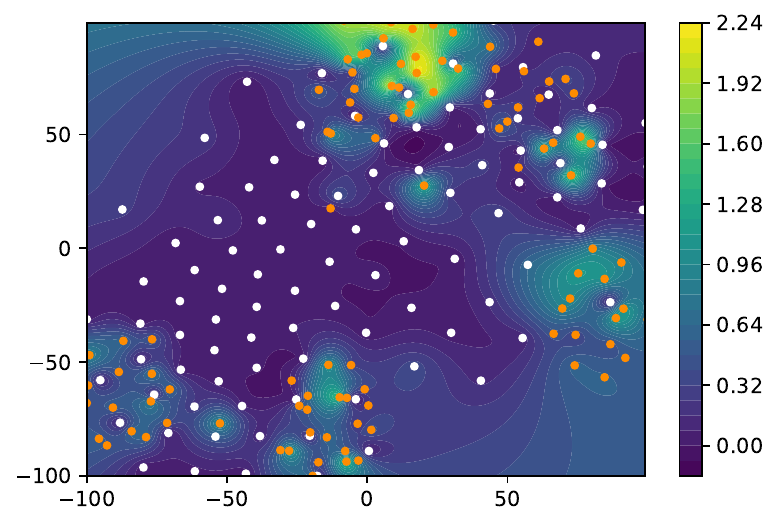}}
    \subfigure[walker2d]{\includegraphics[width=0.32\linewidth]{./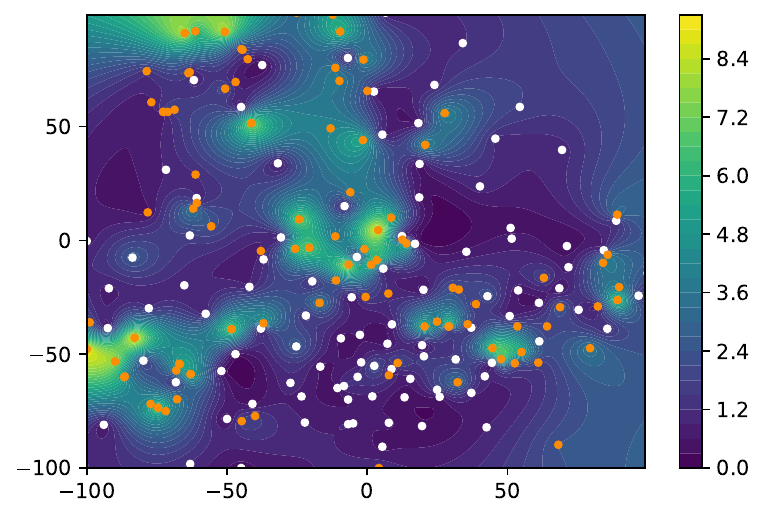}}
    \subfigure[hopper]{\includegraphics[width=0.32\linewidth]{./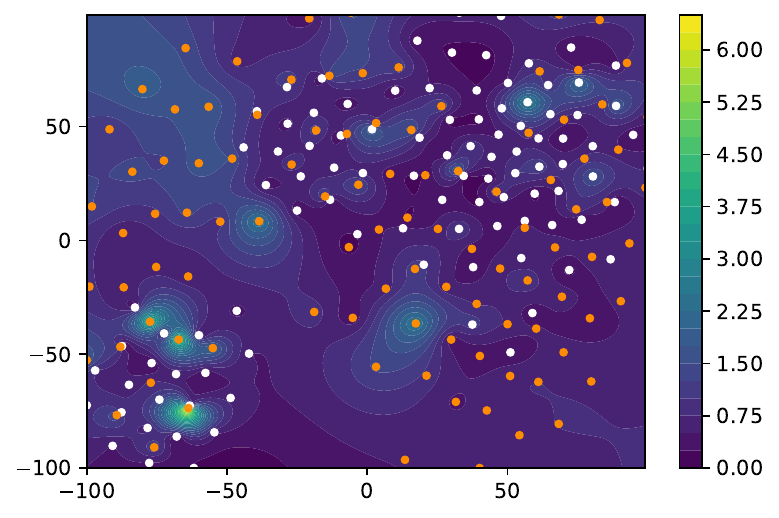}}
    \caption{Uncertainty estimation of the in-distribution samples (white) and OOD samples (orange). The results (a-c) are obtained from DRVF, while the results (d-f) are estimated by PBRL. The brighter area in the contour indicates higher uncertainty, while the darker area indicates lower uncertainty. DRVF demonstrates a more favorable alignment of the brighter areas with the OOD samples, suggesting an enhanced ability to quantify uncertainty.}
    \label{uncertainty_estimation}
\end{figure*}

\subsection{Uncertainty Quantification}

To validate whether DRVF provides a reasonable uncertainty quantification, especially compared with PBRL, we visualize the uncertainty estimation of $Q$-values for in-distribution and OOD samples in Fig. \ref{uncertainty_estimation}. Specifically, we train the Bayesian posterior in the medium-replay dataset and evaluate the learned uncertainty for in-distribution and OOD data. Since expert trajectories are not contained in the medium-replay dataset, the state-action pairs induced by the expert policy can be regarded as OOD data points. Therefore, we sample in-distribution data points from the medium-replay dataset and OOD data points from the expert dataset. The original states and actions are projected into two-dimensional space using t-SNE. The uncertainty quantification for different state-action pairs is visualized in the contour plot.

As depicted in Fig. \ref{uncertainty_estimation}, in-distribution samples (white) mainly distribute in darker areas, indicating lower uncertainty. In contrast, OOD samples (orange) are situated in brighter areas and assigned with higher uncertainty.
It is expected that $Q$ values exhibit low uncertainty on in-distribution data and high uncertainty in the OOD region. In this regard, DRVF demonstrates the desired behavior, as the bright areas in the contour plot align with the OOD region. On the other hand, PBRL fails to accurately estimate the uncertainty, as indicated by the inconsistency between the OOD region and the contour plot. By providing a comparison with PBRL, we emphasize the superior uncertainty quantification capability of DRVF, validating our motivation to improve uncertainty estimation in offline RL.

\subsection{Computational Efficiency}\label{computational-efficiency}

\begin{figure*}[ht]
    \centering
\includegraphics[width=0.32\linewidth]{./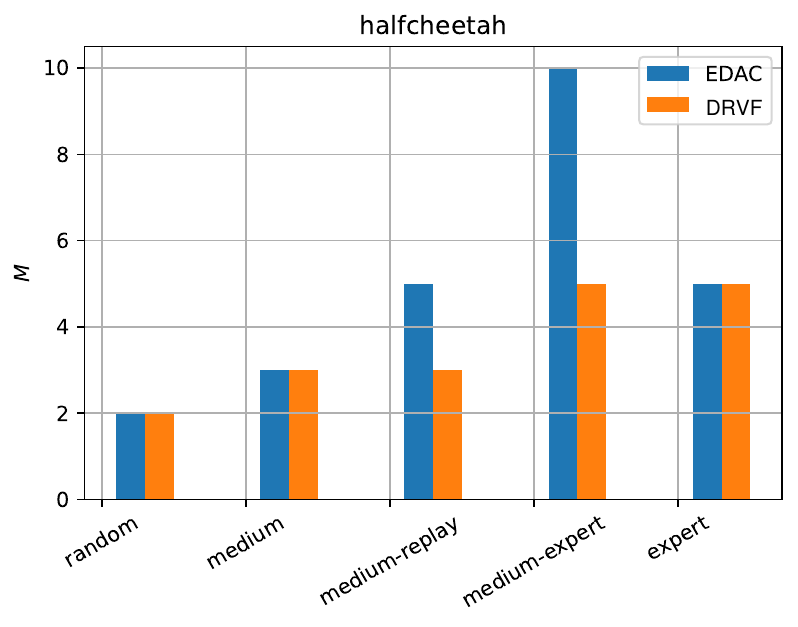}
\includegraphics[width=0.32\linewidth]{./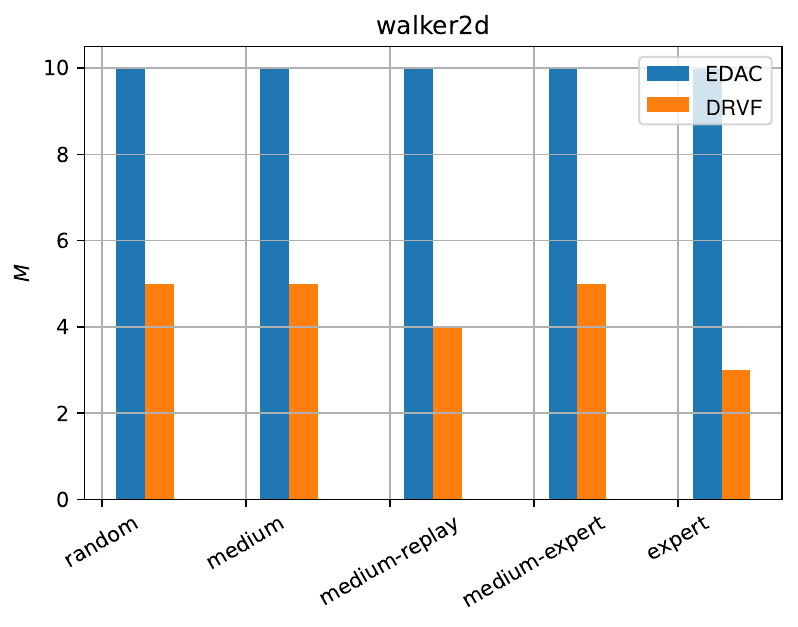}
\includegraphics[width=0.32\linewidth]{./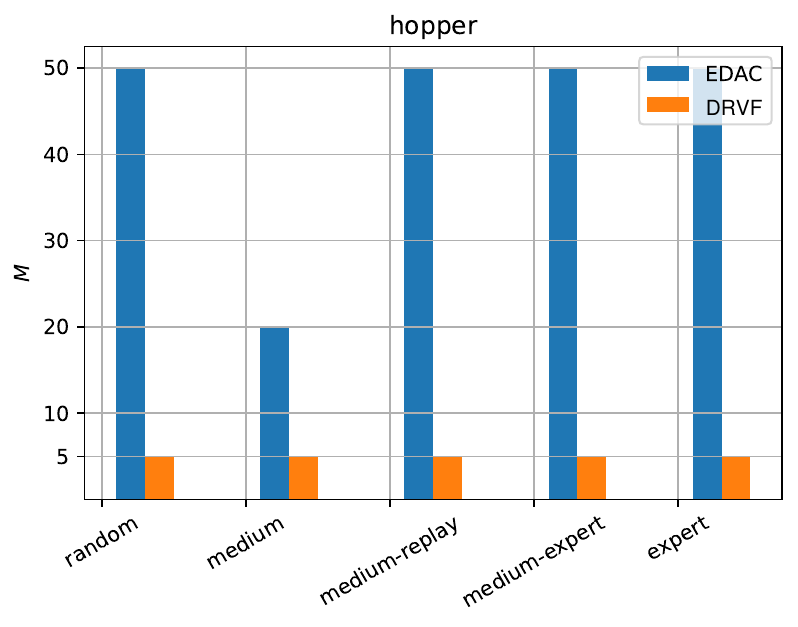}
    \caption{Minimum number of $Q$-ensembles ($M$) required to obtain the performance in Table \ref{table1}. DRVF needs much fewer ensembles than EDAC in most cases and reduces the required parameters.}
    \label{ensemble_number}
\end{figure*}

\begin{table}[ht]
\centering
\caption{Computational costs of uncertainty-based methods.}
\footnotesize
\label{computation}
\resizebox{0.6\textwidth}{!}{
\begin{tabular}{llrrrr}
\toprule
& Environments & PBRL & SAC-N & EDAC & DRVF \\ 
\midrule
\multirow{3}{*}{\makecell{Runtime \\ (s/epoch)}} & halfcheetah-m & 147.3 & 19.4 (N=10)   &  24.9    &   33.4   \\
& walker2d-m    & 163.7 & 17.5(N=20)   &   24.5   &   33.8   \\
& hopper-m   & 150.7 & 117.1 (N=500)  &   25.5   &    33.1  \\ 
\midrule
\multirow{3}{*}{\makecell{Number of \\ parameters}} & halfcheetah-m & 2.9M & 2.9M (N=10) & 2.9M & \textbf{1.5M} \\
& walker2d-m & 2.9M & 5.7M (N=20) &  2.9M & \textbf{1.5M}\\
& hopper-m & 2.8M & 135.8M (N=500) & 13.7M & \textbf{1.5M} \\ 
\bottomrule
\end{tabular}}
\end{table}
We evaluate the computational efficiency using two criteria: the runtime per epoch and the number of parameters. We compare our method against other uncertainty-based methods, including PBRL, SAC-N, and EDAC. All methods are tested on the same physical machine equipped with a Tesla P40 GPU. As presented in Table \ref{computation}, PBRL takes much longer time due to its lack of parallel processing for the ensemble. In addition, EDAC takes more time than SAC-N to calculate the diversified gradient regularization, and DRVF spends time sampling from the posterior distribution and OOD data. We remark that although DRVF takes a slightly longer time than EDAC or SAC-N (with N=10 or 20), it offers a significant improvement in parametric efficiency. For instance, in the halfcheetah and walker2d tasks, DRVF needs half the number of parameters of EDAC or PBRL. In the hopper environments, EDAC or SAC-N rely on a large number of networks, while DRVF only needs about one-tenth or one-percent of their parameters. In Fig. \ref{ensemble_number}, we also illustrate the number of ensembles necessary to achieve the strong performance of EDAC and DRVF in Table \ref{table1}.

\subsection{Ablation Studies}

In this part, we examine the effect of three main components in our method: BNN, repulsive regularization term, and ensemble. In general, we find that Bayesian inference module is necessary for our method, and the repulsive regularization by sampling OOD actions from the learned policy is crucial for policy learning. Meanwhile, we observe that more ensembles can lead to more stable performance and that DRVF can obtain favorable performance with 5 or fewer ensembles. DRVF is also insensitive to the number of samples from the posterior distribution and OOD actions. Due to page limits, we put detailed analyses in Appendix E.

\section{Conclusion}
In this paper, we introduce DRVF, a lightweight and efficient uncertainty-based offline RL algorithm. We use Bayesian neural networks combined with ensembles to construct randomized value functions and estimate the Bayesian posterior and uncertainties for $Q$-values by sampling from the ensemble BNNs. Under the linear MDP assumptions, DRVF recovers the provably efficient LCB penalty and the true Bayesian posterior. By incorporating repulsive regularization, DRVF encouranges diversity among randomzied value functions and achieves significant performance improvements with fewer ensembles. We validate the performance and parametric efficiency of DRVF on the D4RL benchmarks, including challenging AntMaze tasks. 
Future works will extend this idea to learning environmental dynamics models and exploring the online fine-tuning capability. It is also expected to empirically reduce the time required for sampling and improve convergence and robustness, especially in sparse-reward settings.

\section{Acknowledgements}
This work was supported by the Heilongjiang Touyan Innovation Team Program.

\bibliographystyle{unsrt}  
\bibliography{references}

\clearpage
\appendix
\section{Theoretical Proof}
\label{app:them}
\subsection{Proof of the LCB-Term}
To learn the posterior distribution of $Q$-function, we consider a Bayesian perspective of LSVI and use Bayesian linear regression to learn the posterior of $w$. Under linear MDP assumptions, the following theorem establishes the method to approximate the LCB-penalty $\Gamma_t^{\rm lcb}(s_t,a_t)$.
\begin{theorem*}
[Theorem 1 restate]
Under linear MDP settings, it holds for the standard deviation of the estimated posterior distribution $\mathbb P(\tilde Q \,|\, s, a, \cD_m)$ follows
\begin{equation}
\label{eq:lcb-1}
\mathrm{Std}(\tilde Q \,|\, s_t, a_t, \cD_m) 
= \left[\psi(s_t, a_t)^\top \Lambda^{-1}_t \psi(s_t, a_t)\right]^{\nicefrac{1}{2}}
:= \Gamma_t^{\rm lcb}(s_t,a_t),
\end{equation}
where $\Lambda_t = \sum\nolimits_{i \in [m]} \psi(s^i_t, a^i_t) \psi(s^i_t, a^i_t)^\top + \lambda \cdot \mathbf I$, and $\Gamma_t^{\rm lcb}(s_t,a_t)$ is defined as the LCB-term.
\end{theorem*}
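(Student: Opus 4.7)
The plan is to derive the posterior distribution of $w$ via Bayesian linear regression, then transfer its covariance to $\tilde Q = \psi(s_t,a_t)^\top w$ through the linearity of the $Q$-function under the linear MDP assumption. First I would set up the Bayesian model: place an isotropic Gaussian prior $w \sim \cN(0, \lambda^{-1}\bI)$ (this is the source of the $\lambda\cdot\bI$ term in $\Lambda_t$) and adopt a Gaussian observation model $y_t^i \,|\, \psi_t^i, w \sim \cN(\psi_t^i{}^\top w,\, 1)$, where $y_t^i = r(s_t^i,a_t^i) + V_{t+1}(s_{t+1}^i)$ is exactly the LSVI regression target from Eq.~\eqref{eq::appendix_OOD_LSVI}. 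This is the natural Bayesian analogue of the least-squares problem the paper already uses.

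Next I would invoke Gaussian conjugacy: with a Gaussian prior and Gaussian likelihood, the posterior $\mathbb P(w\,|\,\hD_m)$ is again Gaussian. A standard computation (completing the square in the exponent of the joint density) yields posterior mean $\mu_w = \Lambda_t^{-1}\sum_{i\in[m]} \psi(s_t^i,a_t^i)\,y_t^i$ and posterior covariance $\Sigma_w = \Lambda_t^{-1}$, where $\Lambda_t = \sum_{i\in[m]}\psi(s_t^i,a_t^i)\psi(s_t^i,a_t^i)^\top + \lambda\cdot\bI$. It is worth noting that the posterior mean coincides with the regularized LSVI solution from Eq.~\eqref{eq::pf_tilde_w}, which provides a consistency check between the Bayesian and frequentist formulations used in the paper.

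Finally, since $\tilde Q = \psi(s_t,a_t)^\top w$ is a deterministic linear functional of a Gaussian random vector, $\tilde Q \,|\, s_t,a_t,\cD_m$ is itself Gaussian with variance
\begin{equation*}
\VV(\tilde Q \,|\, s_t,a_t,\cD_m) = \psi(s_t,a_t)^\top \Sigma_w\, \psi(s_t,a_t) = \psi(s_t,a_t)^\top \Lambda_t^{-1}\, \psi(s_t,a_t).
\end{equation*}
Taking the square root gives $\mathrm{Std}(\tilde Q\,|\,s_t,a_t,\cD_m) = \bigl[\psi(s_t,a_t)^\top \Lambda_t^{-1}\psi(s_t,a_t)\bigr]^{1/2} = \Gamma_t^{\rm lcb}(s_t,a_t)$, as claimed.

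I expect the main obstacles to be bookkeeping rather than conceptual: one has to justify the choice of Gaussian likelihood (essentially a modelling assumption that makes ELBO-based variational inference tractable, consistent with Section 4.1), and one must handle the fact that $y_t^i$ itself depends on the downstream value function $V_{t+1}$. The paper explicitly treats $y_t^i$ as a fixed target given the current $Q$-function, which sidesteps this dependency and reduces the analysis to a single-step Bayesian linear regression, so the derivation goes through cleanly once that convention is adopted.
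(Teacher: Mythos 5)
Your proposal is correct and follows essentially the same route as the paper's own proof: Bayesian linear regression on the LSVI targets with a Gaussian prior and unit-variance Gaussian likelihood, conjugacy/completing the square to obtain $w \,|\, \cD_m \sim \cN(\mu_t, \Lambda_t^{-1})$, and then pushing the covariance through the linear map $\tilde Q = \psi(s_t,a_t)^\top w$. If anything, you are slightly more careful than the paper in stating the prior explicitly as $\cN(0,\lambda^{-1}\bI)$ (the paper calls it ``standard normal'' yet writes $-\tfrac{\lambda}{2}\|w\|^2$), which is the correct reading needed to produce the $\lambda\cdot\mathbf I$ term in $\Lambda_t$.
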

\begin{proof}
    It holds from Bayes rule that 
    \begin{equation}\nonumber
        \log q^*(w \,|\, \mathcal D_m) 
        = \log \mathbb P(w \,|\, \mathcal D_m)
        = \log \mathbb P(w) + \log \mathbb P(\mathcal D_m \,|\, w) + \mathrm{Const}.
    \end{equation}
    By assuming that the prior of $w$ is the standard normal distribution, we have
    \begin{equation}
        Q \,|\, s, a, w \sim \mathcal N(\psi(s, a)^\top w, 1).
    \end{equation}
    Thus, it holds that
    \begin{equation}
        \log q^*(w \,|\, \mathcal D_m) 
        = - \frac{\lambda}{2} \| w \|^2 - \sum_{i \in [m]} \frac{1}{2} \| \psi(s^i_t, a^i_t)^\top - y^i_t \|^2 + \mathrm{Const}
        = -\frac{1}{2} (w - \mu_t)^\top \Lambda_t^{-1} (w - \mu_t) + \mathrm{Const}.
    \end{equation}
    where we have 
    \begin{equation}
        \mu_t = \Lambda_t^{-1} \sum_{i \in [m]} \psi(s^i_t, a^i_t) y_t^i, 
        \Lambda_t = \sum\nolimits_{i \in [m]}
        \psi(s^i_t, a^i_t) \psi(s^i_t, a^i_t)^\top + \lambda \cdot \mathbf I
    \end{equation}
    Thus, we obtain that $w \,|\, \cD_m \sim \mathcal N(\mu_t, \Lambda_t^{-1})$. It then holds for any $(s, a) \in \cS \times \cA$ that
    \begin{equation}
        \mathrm{Var}(\tilde Q \,|\, S=s, A=a) 
        = \mathrm{Var}(\psi(s, a)^\top w)
        = \psi(s, a)^\top \Lambda^{-1}_t \psi(s, a),
    \end{equation}
    which concludes the proof of Theorem 1.
\end{proof}

\subsection{$\xi$-Uncertainty Quantifier and the optimality gap}

In this part, we first recall the definition of a $\xi$-uncertainty quantifier, and then prove that the LCB-penalty $\Gamma_t^{\rm lcb}(s_t,a_t)$ obtained by ensemble BNNs is a $\xi$-uncertainty quantifier and benefits the optimality of the value function.
\begin{definition*}
[Definition 1 restate]
For all $(s,a)\in{\mathcal{S}}\times\mathcal{A}$, if the inequality 
$|\widehat{\mathcal{T}}V_{t+1}(s,a)-\mathcal{T}V_{t+1}(s,a)|\leq\Gamma_{t}(s,a)$
holds with probability at least $1-\xi$, then $\{\Gamma_{t}\}_{t\in[T]}$ is a $\xi$-Uncertainty Quantifier. $\widehat{\mathcal{T}}$ empirically estimates $\mathcal{T}$ based on the data.
\end{definition*}
Under the linear MDP assumption, the value estimation $\widehat{\mathcal{T}}V_{t+1}$ is estimated by solving the least-squares problem in Eq. (3). Therefore, the value estimation can be expressed by $\widehat{\mathcal{T}}V_{t+1}(s_t, a_t)=\psi(s_t,a_t)^\top\widehat{w}_t$. We remark that the $\xi$-uncertainty quantifier are widely used in the theoretical analysis for online RL and offline RL \cite{lsvi-2020,pevi-2021, xie2021bellman}, and $\Gamma_t^{\rm lcb}(s,a)$ in Eq. (\ref{eq:lcb-1}) forms a valid $\xi$-uncertainty quantifier.
\begin{proposition*}[Proposition 1 restate]
    The LCB-penalty in Eq. (\ref{eq:lcb-1}) forms a valid $\xi$-uncertainty quantifier.
\end{proposition*}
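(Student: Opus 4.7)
The plan is to reduce the inequality in Definition 1 to a self-normalized concentration bound for Bayesian/least-squares linear regression, with the LCB-term $\Gamma_t^{\rm lcb}(s_t,a_t) = \|\psi(s_t,a_t)\|_{\Lambda_t^{-1}}$ emerging naturally from a Cauchy--Schwarz step. First, I would invoke the linear-MDP structure to write both operators in a common parametric form. The empirical operator is $\widehat{\mathcal{T}} V_{t+1}(s_t,a_t) = \psi(s_t,a_t)^\top \widehat{w}_t$, where $\widehat{w}_t$ is the LSVI solution given in Eq.~(\ref{eq::pf_tilde_w}). For the true operator, I would use the linearity of $r$ and $P$ to show that $\mathcal{T} V_{t+1}(s_t,a_t) = \psi(s_t,a_t)^\top w_t^\star$, where $w_t^\star = \upsilon + \int V_{t+1}(s')\,\mathrm{d}\varphi(s')$. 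Thus the pointwise error reduces to a linear functional of the parameter deviation,
\begin{equation}\nonumber
\bigl|\widehat{\mathcal{T}} V_{t+1}(s_t,a_t) - \mathcal{T} V_{t+1}(s_t,a_t)\bigr| = \bigl|\psi(s_t,a_t)^\top (\widehat{w}_t - w_t^\star)\bigr|.
\end{equation}

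Second, I would apply Cauchy--Schwarz in the $\Lambda_t$-weighted inner product to separate the feature geometry from the parameter estimation error:
\begin{equation}\nonumber
\bigl|\psi(s_t,a_t)^\top (\widehat{w}_t - w_t^\star)\bigr| \leq \|\psi(s_t,a_t)\|_{\Lambda_t^{-1}}\,\|\widehat{w}_t - w_t^\star\|_{\Lambda_t}.
\end{equation}
The first factor is exactly $\Gamma_t^{\rm lcb}(s_t,a_t)$ as defined in Eq.~(\ref{eq:lcb-1}). It remains to show that the second factor is bounded by a deterministic constant (absorbed as a scaling factor, or chosen so that the proposition holds up to a known constant) with probability at least $1-\xi$.

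Third, I would control $\|\widehat{w}_t - w_t^\star\|_{\Lambda_t}$ via a self-normalized concentration inequality, following the standard LSVI analysis used in \cite{lsvi-2020, pevi-2021, PBRL-2022}. Substituting the closed form of $\widehat{w}_t$ from Eq.~(\ref{eq::pf_tilde_w}), the deviation decomposes into a noise term $\Lambda_t^{-1} \sum_k \psi(s_t^k,a_t^k)\,\varepsilon_t^k$, where $\varepsilon_t^k = V_{t+1}(s_{t+1}^k) - \mathbb{E}_{s'\sim P(\cdot|s_t^k,a_t^k)} V_{t+1}(s')$ is a bounded zero-mean martingale difference conditioned on the filtration up to step $t$, plus a bias term $-\lambda \Lambda_t^{-1} w_t^\star$ from the ridge regularizer. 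I would then apply the self-normalized Hoeffding bound (or equivalently the Bayesian linear regression concentration used in the derivation of the posterior covariance) to obtain $\|\widehat{w}_t - w_t^\star\|_{\Lambda_t} \leq C\sqrt{d\,\log(1/\xi)} + \sqrt{\lambda}\,\|w_t^\star\|_2$ with probability at least $1-\xi$, where $C$ depends only on the value-function range. Combining with the Cauchy--Schwarz step gives the $\xi$-uncertainty quantifier property up to a multiplicative constant that can be absorbed into the scaling $\alpha$ used in Eq.~(\ref{eq:gaussian-lcb}).

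The main obstacle I expect is the self-normalized concentration step: the noise $\varepsilon_t^k$ is not independent across $k$ because $V_{t+1}$ may depend on data used at earlier steps, so one needs the martingale (self-normalized) version of the concentration inequality together with a uniform covering argument over the class of value functions $V_{t+1}$ that could be produced by the algorithm. This is the essential technical content, and I would cite the standard covering/self-normalized bound from \cite{lsvi-2020} rather than rederive it. Once this bound is in hand, the proposition follows immediately.
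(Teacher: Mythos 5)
Your proposal is correct and follows essentially the same route as the paper's proof: both reduce the error $\widehat{\mathcal{T}}V_{t+1}-\mathcal{T}V_{t+1}$ to a linear functional $\psi(s_t,a_t)^\top(\widehat{w}_t-w_t)$ of the LSVI parameter deviation and then invoke the standard self-normalized concentration analysis of \cite{lsvi-2020,pevi-2021} to bound it by $\tau_t\cdot\Gamma_t^{\rm lcb}(s_t,a_t)$ with probability $1-\xi$. You are in fact more explicit than the paper at the two points it delegates to citations --- the $\Lambda_t$-weighted Cauchy--Schwarz split, the ridge bias term, and the need for a martingale-plus-covering argument over the data-dependent class of $V_{t+1}$ --- but this is an unpacking of the same argument rather than a different one.
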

\begin{proof}
    Based on the solution 
    \begin{equation}
        \widehat w_t = \Lambda^{-1}_t\sum^m_{k = 1}\psi(s^k_t, a^k_t)y_t^k,
        \Lambda_t = \sum^m_{k=1}\psi(s^k_t, a^k_t)\psi(s^k_t, a^k_t)^\top,
    \end{equation}
    the distance between the empirical Bellman operator and Bellman operator satisfies 
    \begin{equation}
        \begin{aligned}
        \widehat{\mathcal{T}}V_{t+1}(s_t, a_t) 
        - \mathcal{T}V_{t+1}(s_t, a_t)
        &= \psi(s_t,a_t)^\top\left (\widehat{w}_t-w_t)w_t \right )\\
        &= \psi(s_t,a_t)^\top \left (\Lambda^{-1}_t\sum^m_{k = 1}\psi(s^k_t, a^k_t)y_t^k - \Lambda^{-1}_t\Lambda_t w_t \right )\\
        &= \psi(s_t,a_t)^\top \Lambda^{-1}_t \left (\sum^m_{k = 1}\psi(s^k_t, a^k_t)y_t^k - \sum^m_{k=1}\psi(s^k_t, a^k_t)\psi(s^k_t, a^k_t)^\top w_t \right )\\
        &= \psi(s_t,a_t)^\top \Lambda^{-1}_t\sum^m_{k = 1}\psi(s^k_t, a^k_t) 
        \left (r(s^k_t, a^k_t)+ V_{t+1}(s^k_{t+1})  - \mathcal{T}V_{t+1}(s^k_t, a^i_t) \right)
        \end{aligned}
    \end{equation}
    Following the standard analysis in \cite{lsvi-2020, pevi-2021, PBRL-2022}, it holds with probability at least $1-\xi$ that 
    \begin{equation}
        |\psi(s_t,a_t)^\top \Lambda^{-1}_t\sum^m_{k = 1}\psi(s^k_t, a^k_t) \left (y_t^k - \mathcal{T}V_{t+1}(s^k_t, a^k_t)\right )|
        \leq \tau_t \cdot 
        \sqrt{\psi(s_t, a_t)^\top \Lambda^{-1}_t \psi(s_t, a_t)},
    \end{equation}
    where $ \tau_{t}=\mathcal{O}\bigl{(}T\cdot\sqrt{d}\cdot\text{log}(T/\xi)\bigr{)} > 0$ is the scaling parameter. Therefore, we obtain that 
    \begin{equation}
        |\widehat{\mathcal{T}}V_{t+1}(s,a)-\mathcal{T}V_{t+1}(s,a)|\leq\tau_t \cdot 
        \sqrt{\psi(s_t, a_t)^\top \Lambda^{-1}_t \psi(s_t, a_t)}.
    \end{equation}
\end{proof}

Since the LCB-penalty based on ensemble BNNs forms a $\xi$-uncertainty quantifier, we can further characterize the optimality gap of the value iteration.
\begin{corollary*}[Corollary 1 restate]
(\cite{pevi-2021}). 
Since $\Gamma_{t}^{\rm lcb}(s_{t},a_{t})$ is a $\xi$-uncertainty quantifier, under Definition 1,
if we set $y=\mathcal{T}V_{t+1}(s,a)$, then the suboptimality gap satisfies
$$V^{*}(s_{1})-V^{\pi_{1}}(s_{1})\leq\sum^{T}_{t=1}\mathbb{E}_{\pi^{*}}\bigl{[}%
\Gamma_{t}^{\rm lcb}(s_{t},a_{t}){\,|\,}s_{1}\bigr{]}.$$
\end{corollary*}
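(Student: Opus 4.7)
The plan is to follow the pessimistic value iteration (PEVI) analysis of Jin et al. 2021, leveraging the fact already established in Proposition 1 that $\Gamma_t^{\rm lcb}$ is a valid $\xi$-uncertainty quantifier. Let $\hat V_t$ denote the pessimistic value estimate produced by the LCB-penalized Bellman backup $\hat Q_t = \hT V_{t+1} - \Gamma_t^{\rm lcb}$, and let $\pi_1$ be the greedy policy with respect to $\hat Q$. I would start from the two-term decomposition
\begin{equation*}
V^*(s_1) - V^{\pi_1}(s_1) = \bigl(V^*(s_1) - \hat V_1(s_1)\bigr) + \bigl(\hat V_1(s_1) - V^{\pi_1}(s_1)\bigr),
\end{equation*}
and bound each bracket separately.

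For the second bracket (the pessimism piece), I would prove $\hat V_t(s) \leq V^{\pi_1}_t(s)$ pointwise by backward induction on $t$. The base case $t = T+1$ is trivial, and the inductive step uses Proposition 1: since $\Gamma_t^{\rm lcb}$ dominates $|\hT V_{t+1} - \cT V_{t+1}|$ with probability at least $1-\xi$, subtracting the LCB penalty cancels any overestimation and forces $\hat Q_t(s,a) \leq r(s,a) + \EE_{s'}[V^{\pi_1}_{t+1}(s')]$. Taking the greedy max over actions and pushing it under $\pi_1$ completes the induction, so this bracket is non-positive.

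For the first bracket, I would use the extended value-difference identity to telescope the gap across the horizon into
\begin{equation*}
V^*(s_1) - \hat V_1(s_1) \leq \sum_{t=1}^{T} \EE_{\pi^*}\bigl[(\cT V_{t+1} - \hT V_{t+1})(s_t,a_t) + \Gamma_t^{\rm lcb}(s_t, a_t) \,\big|\, s_1\bigr],
\end{equation*}
where the per-step greedy-suboptimality terms $\hat Q_t(s_t,\pi^*_t(s_t)) - \max_a \hat Q_t(s_t,a)$ are non-positive and can be dropped. Applying Proposition 1 to the first summand bounds the per-step contribution by a constant multiple of $\Gamma_t^{\rm lcb}(s_t,a_t)$, and the constant is absorbed into the scaling $\tau_t$ already built into $\Gamma_t^{\rm lcb}$, delivering the claimed inequality.

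The main obstacle is the uniform-in-$V_{t+1}$ version of Proposition 1 that the backward induction silently requires: the bound in Proposition 1 is pointwise for a \emph{fixed} $V_{t+1}$, but in the argument $V_{t+1} = \hat V_{t+1}$ is itself data-dependent. The standard fix, which I would import verbatim, is a uniform covering argument over the class of admissible value functions together with a union bound over $t \in [T]$; this only rescales $\xi$ and the scaling factor $\tau_t$ in $\Gamma_t^{\rm lcb}$ and leaves the form of the final inequality unchanged. The rest of the argument is a routine telescoping computation.
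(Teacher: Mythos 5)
Your proposal is correct and is essentially the paper's own proof: the paper simply defers to the pessimistic value iteration analysis of \cite{pevi-2021}, and your two-term decomposition, backward-induction pessimism lemma, extended value-difference telescoping, and the uniform-concentration/covering caveat are exactly the ingredients of that argument. The only cosmetic point is that the raw bound carries a factor of $2$ on $\Gamma_t^{\rm lcb}$ (from bounding both $\cT V_{t+1}-\hT V_{t+1}$ and the explicit penalty), which you correctly note is absorbed into the scaling $\tau_t$.
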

\begin{proof}
    See e.g., \cite{pevi-2021} for a detailed proof.
\end{proof}
This corollary implies that the pessimism of the value function holds with probability at least
$1-\xi$ as long as
$\Gamma_{t}^{\rm lcb}(s_{t},a_{t})$ 
are $\xi$-uncertainty quantifiers.

\section{Repulsive regularization for approximating the Bayesian posterior}\label{app:repulsive}

In this section, we relate the repulsive term with recent theoretical work about repulsive ensembles. Our analysis demonstrates that the repulsive term in Eq. (9) can efficiently promote diversity and encourage the ensemble BNNs to present better uncertainty quantification with small ensembles.

Firstly, we suggest that the repulsive term can be expressed in terms of the gradients of a kernel and that the resulting update of the Q-function is similar to computing the gradient flow dynamics for estimating the true Bayesian posterior through optimizing the KL divergence in Wasserstein space.
\begin{lemma}
The repulsive term in Eq. (9), i.e. the standard deviation ${\rm Std}\big(\{Q^{(i)}\}_{i\in[n]}\big)$ is a function of the sum of the RBF kernel 
\begin{equation}
    \sum_{i,j\in [n]}k(Q^{(i)},Q^{(j)})=\sum_{i,j\in [n]}\exp\big{(}-\gamma(Q^{(i)}-Q^{(j)})^{2}\big{)},
\end{equation}
where $Q^{(i)}, i\in[n]$ and $Q^{(j)},j\in[n]$ are estimates from ensemble BNNs, and $\gamma$ is the parameter of the RBF kernel. 
\label{lemma1}
\end{lemma}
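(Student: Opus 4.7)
The plan is to show that the sample standard deviation and the RBF kernel sum are both symmetric functions of the pairwise squared differences $\{(Q^{(i)}-Q^{(j)})^2\}_{i,j\in[n]}$, and then to use a Taylor expansion of $\exp(-\gamma\,\cdot)$ to express one as a smooth function of the other. First I would rewrite the variance by eliminating the sample mean:
$$\mathrm{Std}^2\bigl(\{Q^{(i)}\}_{i\in[n]}\bigr) \;=\; \frac{1}{n}\sum_{i\in[n]}(Q^{(i)}-\bar Q)^2 \;=\; \frac{1}{2n^2}\sum_{i,j\in[n]}(Q^{(i)}-Q^{(j)})^2,$$
where $\bar Q = n^{-1}\sum_i Q^{(i)}$. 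This identity trades the dependence on the centered values for the pairwise squared distances that appear inside the RBF kernel, putting both quantities in a common form.

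Next I would Taylor-expand each kernel term around zero separation,
$$k(Q^{(i)},Q^{(j)}) = \exp\bigl(-\gamma(Q^{(i)}-Q^{(j)})^2\bigr) = 1 - \gamma(Q^{(i)}-Q^{(j)})^2 + O\bigl(\gamma^2(Q^{(i)}-Q^{(j)})^4\bigr),$$
sum over all index pairs, and combine with the identity above to obtain
$$\sum_{i,j\in[n]} k(Q^{(i)},Q^{(j)}) \;=\; n^2 - 2\gamma\,n^2\,\mathrm{Std}^2\bigl(\{Q^{(i)}\}_{i\in[n]}\bigr) + O(\gamma^2).$$
Solving for $\mathrm{Std}^2$ and taking a square root gives a strictly decreasing smooth map from the kernel sum to $\mathrm{Std}$, which is the content of Lemma~\ref{lemma1}. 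In the scaling limit $\gamma\to 0^+$, the remainder vanishes and the relation becomes exact; maximizing $\mathrm{Std}$ is then equivalent to minimizing $\sum_{i,j}k(Q^{(i)},Q^{(j)})$, which is the standard repulsive-ensemble interpretation invoked in the subsequent Wasserstein gradient-flow argument.

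The main obstacle is making the phrase \emph{``is a function of''} rigorous: for generic configurations with $n\ge 3$, two collections can share the same kernel sum while having different variances, so a strict functional identity only holds in the small-bandwidth (or first-order Taylor) regime. I would therefore state the lemma via the quantitative expansion above, noting that the RBF bandwidth $\gamma$ is a hyperparameter that can be chosen so that the higher-order terms are negligible, and that the repulsive force induced by either quantity points in the same direction in function space. This is the same regime in which the SVGD-style repulsion and the gradient-flow interpretation of Appendix B are valid, so no additional approximation is introduced by working with $\mathrm{Std}$ rather than with the sum of RBF kernels directly.
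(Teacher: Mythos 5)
Your first step --- the pairwise decomposition $\mathrm{Var}\bigl(\{Q^{(i)}\}\bigr)=\frac{1}{2n^2}\sum_{i,j}(Q^{(i)}-Q^{(j)})^2$ --- is exactly the paper's key identity, but the second half of your argument takes a genuinely different route. The paper does not Taylor-expand at all: it uses the exact relation $(Q^{(i)}-Q^{(j)})^2=-\gamma^{-1}\log k(Q^{(i)},Q^{(j)})$ to write
\begin{equation*}
\mathrm{Var}\bigl(\{Q^{(i)}\}_{i\in[n]}\bigr)=-\frac{1}{2n^2\gamma}\sum_{i,j\in[n]}\log k\bigl(Q^{(i)},Q^{(j)}\bigr),
\end{equation*}
so the variance is an \emph{exact} function of the sum of \emph{log}-kernels (equivalently, of the product $\prod_{i,j}k$), with no small-$\gamma$ regime needed. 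This neatly sidesteps the injectivity obstacle you correctly raise: for $n\ge 3$ the variance is indeed not a function of $\sum_{i,j}k$ itself, and your honest acknowledgment that the literal statement only holds to first order in $\gamma$ is a real observation --- the lemma as written says ``sum of the RBF kernel'' when the proof actually delivers ``sum of the log of the RBF kernel.'' What each approach buys: your expansion $\sum_{i,j}k=n^2-2\gamma n^2\,\mathrm{Var}+O(\gamma^2)$ directly addresses the kernel sum as literally stated and makes the monotone (repulsive-force) correspondence explicit in the small-bandwidth regime, which is also the regime relevant to the gradient-flow argument in Appendix B; the paper's log identity buys exactness and a cleaner downstream claim that $\nabla_Q\,\mathcal{R}$ is proportional to the kernelized repulsive force, at the cost of proving a slightly different statement than the one announced. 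One minor shared blemish: both you (via ``taking a square root'') and the paper (via the sloppy ``$\mathrm{Std}\propto\mathrm{Var}$'') should simply say that $\mathrm{Std}$ is the square root of $\mathrm{Var}$ and hence a (monotone) function of whatever $\mathrm{Var}$ is a function of; proportionality is false.
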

\begin{proof}
    In \cite{d2021repulsive}, the RBF kernel is used to model the interaction between ensemble members and create a repulsive effect. We present the relationship between the standard deviation and the RBF kernel as follows:
    \begin{equation}
    \begin{aligned}
    {\rm Std}\big(\{Q^{(i)})\}_{i\in[n]}\big) 
    \propto {\rm Var}\big(\{Q^{(i)})\}_{i\in[n]}\big) = \frac{1}{n^2}\sum_{i=1}^n\sum_{j=1}^n \frac{(Q^{(i)}-Q^{(j)})^2}{2}
    = -\frac{1}{2n^2\gamma}\sum_{i,j=1}^n \log k(Q^{(i)}, Q^{(j)})
    \end{aligned}
    \end{equation}
\end{proof}

Inspired by \cite{d2021repulsive}, we first introduce the gradient flow in function space and then show that optimizing the loss function in Eq. (10) can be thought as an equivalence to the gradient flow dynamics for estimating the true Bayesian posterior by minimizing KL divergence in Wasserstein space. We assume that the true Bayesian posterior of the Q-function is $\mathbb{P}(Q|s,a,\mathcal{D}_m)$ denoted by $p(Q)$, the learned Q-function is $p(\tilde{Q})$, and $Q^{(i)}\sim p(\tilde{Q})$. To make $p(\tilde{Q})$ close to the true Bayesian posterior, we can minimize the KL divergence between them: 
\begin{equation}\nonumber
\begin{aligned}
    \inf_{p(\tilde{Q})\in\mathcal{P}(\mathcal{M})}D_{KL}(p(\tilde{Q}),p(Q))
    =\int_{\mathcal{M}}\left( \log p(\tilde{Q}) -\log p(Q)\right )p(\tilde{Q})\,\mathrm{d}\tilde{Q}\;,
\end{aligned}
\end{equation}
where $\mathcal{P}(\mathcal{M})$ is the space of probability measures and $\mathcal{M}$ is a Riemannian manifold. This objective can be minimized by building a path along the negative gradient, and the \emph{gradient flow} mean s the curve $Q(t)$ described by that path.

\begin{proposition*}[Formal version of Proposition 2]
The optimization for the loss function of the critic network
\begin{equation}
\begin{aligned}
\cL_{\rm critic}^j \,=\, 
\eta_q \cdot \EE_{s,a,r,s'\sim \cD_m}\Big[\Big(Q_{w_j}(s,a)-\hT^{\rm lcb} Q_{w_j}(s,a)\Big)^2
+ \mathrm{KL}\big(q_\theta (w_j) \,\|\, \mathbb P(w)\big)\Big] \\
-\eta_{\rm ood} \cdot \EE_{s\sim \cD_m,a^{\rm ood}\sim \pi}\,[\,\cR\big(Q_{w_j}(s, a^{\rm ood})\big)\,]\,,
\end{aligned}
\label{eq:app-critic}
\end{equation}
is approximate to the update rule for the gradient flow dynamics. Therefore, optimizing the loss function can obtain the true Bayesian posterior.
\end{proposition*}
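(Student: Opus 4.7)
The plan is to establish the equivalence in three stages: (i) write down the Wasserstein gradient flow of the KL divergence between $p(\tilde Q)$ and the true posterior $p(Q)$; (ii) approximate this continuum dynamics by a particle system whose drift and repulsion terms can be read off explicitly; and (iii) match these terms against the three pieces of $\cL_{\rm critic}^j$ in Eq.~\eqref{eq:app-critic}. Since Lemma \ref{lemma1} already rewrites the standard-deviation penalty as a monotone function of $\sum_{i,j} k(Q^{(i)}, Q^{(j)})$ for an RBF kernel $k$, the hard part reduces to showing that pushing particles along $\nabla \log k$ plays the role of the entropy gradient in the Wasserstein flow.

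First I would compute the Wasserstein gradient of $D_{\rm KL}(p(\tilde Q) \,\|\, p(Q))$. Standard optimal-transport calculus (the JKO / Otto formalism) gives the continuity equation
\begin{equation}
\partial_t p(\tilde Q) = \nabla \cdot \bigl( p(\tilde Q) \, \nabla [\log p(\tilde Q) - \log p(Q)] \bigr),
\end{equation}
so a particle $Q^{(i)}$ evolves along $\dot Q^{(i)} = \nabla \log p(Q) - \nabla \log p(\tilde Q)$. The first term is the score of the true posterior, which by Bayes' rule decomposes into a Bellman log-likelihood and a log-prior; these are precisely the two terms inside the expectation in the first line of $\cL_{\rm critic}^j$, so gradient descent on that line produces the drift $\nabla \log p(Q)$ up to the learning rate $\eta_q$. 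The second term, $-\nabla \log p(\tilde Q)$, is the entropic repulsion that prevents mode collapse.

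Next I would approximate the intractable score $\nabla \log p(\tilde Q)$ by a kernel density estimate using the ensemble samples $\{Q^{(j)}\}_{j \in [n]}$ drawn from ensemble BNNs. For the RBF kernel, differentiating $\hat p(\tilde Q) \propto \sum_j k(\tilde Q, Q^{(j)})$ and substituting back yields a force proportional to $\sum_j \nabla k(Q^{(i)}, Q^{(j)})$, which by Lemma \ref{lemma1} is (up to the $-1/(2n^2\gamma)$ factor) the gradient of the variance, and therefore of the standard-deviation based repulsive term $\cR(Q(s,a^{\rm ood}))$ in Eq.~\eqref{eq:repulsive}. OOD actions $a^{\rm ood} \sim \pi$ are the natural evaluation points because the entropy of $p(\tilde Q)$ is dominated by regions where the ensemble disagrees, which occurs precisely on OOD inputs. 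Hence maximizing $\cR$ with coefficient $\eta_{\rm ood}$ implements the repulsive direction $-\nabla \log p(\tilde Q)$ with a particular discretization step.

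Finally, I would combine the drift and repulsion terms to argue that a gradient step on $\cL_{\rm critic}^j$ coincides, up to a reparameterization of the step sizes $\eta_q$ and $\eta_{\rm ood}$ and the usual KDE bias, with a time-discretized Wasserstein gradient-flow step on $D_{\rm KL}(p(\tilde Q)\,\|\,p(Q))$. Since the flow has $p(Q)$ as its unique stationary distribution and is contractive for log-concave targets, this implies $p(\tilde Q) \to p(Q)$, proving the claim. The principal obstacle I anticipate is the last approximation: controlling the KDE bias with only $n$ particles (so that the gradient of the sum of RBF kernels faithfully approximates $\nabla \log p(\tilde Q)$) and ensuring that evaluating the repulsion only on $a^{\rm ood}\sim\pi$ rather than on the whole state-action space still captures enough of the entropy gradient to drive the ensemble toward the true posterior. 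I would handle this either by appealing to the SVGD-style convergence analysis of d'Angelo and Fortuin or by restricting attention to the regime where $\pi$ adequately covers the support of the $Q$-posterior disagreement.
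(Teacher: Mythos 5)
Your proposal follows essentially the same route as the paper's own proof: both write the Wasserstein gradient flow of $D_{KL}(p(\tilde Q)\,\|\,p(Q))$, pass to the particle dynamics $\dot Q^{(i)}=\nabla\log p(Q)-\nabla\log p(\tilde Q)$, approximate the intractable score $\nabla\log p(\tilde Q)$ by an RBF kernel density estimate whose gradient matches (via Lemma~\ref{lemma1}) the standard-deviation repulsive term, and identify the ELBO terms of $\cL_{\rm critic}^j$ with the drift $\nabla\log p(Q)$. Your explicit flagging of the KDE bias and the OOD-only evaluation of the repulsion is a slightly more candid accounting of the approximations than the paper gives, but the argument is the same.
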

\begin{proof}
The evolution in time of $p(\tilde{Q})$, denoted by the \emph{Wasserstein gradient flow}, can be written as follows:
\begin{equation}
    \begin{split}\frac{\partial p(\tilde{Q})}{\partial t}=\nabla\cdot\bigg{(}%
p(\tilde{Q})\nabla\frac{\delta}{\delta p(\tilde{Q})}D_{KL}(p(\tilde{Q}),p(Q))\bigg{)}=\nabla\cdot\bigg{(}p(\tilde{Q})\nabla\big{(}\log p(\tilde{Q})-\log p(Q)\big{)}\bigg{)}\,.\end{split}
\end{equation}
where $\nabla\frac{\delta}{\delta p(\tilde{Q})}D_{KL}(p(\tilde{Q}),p(Q))$ is the Wasserstein gradient and $\frac{\delta}{\delta p(\tilde{Q})}:\mathcal{P}(\mathcal{M})\to\mathbb{R}$ indicates the functional derivative. Following this update, we can give the mean field functional dynamics:
\begin{equation}
    \frac{d{Q}}{dt}=-\nabla\big{(}\log p(\tilde{Q})-\log p(Q)\big{)}\,.
\end{equation}
A sample-based kernel density estimation (KDE) with an RBF kernel can be regarded as smooth estimation for the empirical data distribution and used to approximate the gradient $\nabla \log p(\tilde{Q})$ \cite{d2021repulsive}, that is,
\begin{equation}
    \nabla\log p(\tilde{Q})\approx\frac{\sum_{j=1}^{n}\nabla_{Q_t^{(i)}}k(Q^{(i)}_{t},Q^{(j)}_{t})}{\sum_{j=1}^{n}k(Q^{(i)}_{t},Q^{(j)}_{t})}.
\end{equation}
In this way, we can express the evolution in a discretized manner:
\begin{equation}
    {Q}_{t+1}^{i}={Q}_{t}^{i}+\epsilon_{t}\bigg{(}\nabla_{{Q}}\log%
p(Q_{t}^{i})-\frac{\sum_{j=1}^{n}\nabla_{{Q}^{(i)}_{t}}k({Q}^%
{(i)}_{t},{Q}^{(j)}_{t})}{\sum_{j=1}^{n}k({Q}^{(i)}_{t},{Q}^{(j)}_{t})}%
\bigg{)}\,,
\label{eq:discretized}
\end{equation}
where $\epsilon_t$ is the step size. As shown in \cite{d2021repulsive}, the update rule not only encourages the ensemble members to provide diverse predictions but also helps to converge to the true Bayesian posterior.

    As presented in Eq. (4), the first two term in Eq. (\ref{eq:app-critic}) aims to approximate the posterior of the value function $Q$ by maximizing the ELBO, which plays the same role of the term $\log p(\tilde{Q})$ in Eq. (\ref{eq:discretized}). 
    
    On the other hand, Lemma \ref{lemma1} shows that the standard deviation of Q-values sampled from ensemble BNNs can be propositional to the sum of the RBF kernel. This implies that optimizing our repulsive term can also optimize the repulsive force, i.e., 
    \begin{equation}
        \nabla_{Q} \cR\big(Q_{w_j}(s, a^{\rm ood}) \propto \frac{\sum_{j=1}^{n} \nabla_{{Q}^{(i)}_{t}} k({Q}^{(i)}_{t},{Q}^{(j)}_{t})}{\sum_{j=1}^{n}k({Q}^{(i)}_{t},{Q}^{(j)}_{t})}.
    \end{equation}
    Therefore, optimizing such loss also follows the update rule in Eq. (\ref{eq:discretized}) and can obtain the true Bayesian posterior.
\end{proof}

\section{Implementation Detail}\label{implementation}

\subsection{Experimental setups and baselines.}
All the experiments are conducted on D4RL with the `v2' datasets and 5 random seeds. We run EDAC and DRVF for 3M gradient steps and report the normalized scores of each policy. In Gym Mujoco tasks, the policy is evaluated every 1000 steps, and the evaluation contains 1 episode. In AntMaze tasks, the policy is evaluated every 20000 steps, and the return averages over 100 episodes.

For baselines, we cite the reported results of TD3-BC, PBRL, and SAC-N from their own publications \cite{td3bc-2021, PBRL-2022, EDAC-2021}. For IQL\footnote{https://github.com/ikostrikov/implicit\_q\_learning} and EDAC\footnote{https://github.com/snu-mllab/EDAC}, we reproduce the results with their official implementations. For ATAC\footnote{https://github.com/microsoft/ATAC}, we run the official codes for the results of the `expert' datasets and cite other results from the original paper \cite{ATAC-2022}. For CQL and MOPO, we cite the reported scores in \cite{PBRL-2022}. Since the reported results of UWAC rely on the `v0' datasets, we choose the reimplementation results on the `v2' datasets from PBRL to evaluate these methods in the same setting. We cite the results of COMBO from the original paper.

In Table 2, we adopt the reported performance of several baselines from \cite{rvs}. For IQL, we run its official implementations and report its performance on `v2' dataset. Since the official codes of EDAC or SAC-N cannot work in AntMaze tasks, we do not include them in comparison. Inspired by \cite{why-so-pess}, DRVF utilizes independent targets in AntMaze tasks and achieves competitive performance with a few networks. Considering the sparse rewards in Antaze tasks, we adopt the same way as prior works \cite{cql-2020, why-so-pess} and transform the rewards in the offline datasets by $4(r-0.5)$.

\begin{table}[ht]
\centering
\caption{Hyper-paramters of DRVF}
\footnotesize
\label{hyperparam}
\resizebox{0.6\linewidth}{!}{
\begin{tabular}{lccccc}
\toprule
 Environments & \makecell{number of \\ ensembles $M$} & $\eta_q$ & $\eta_{\rm ood}$ & $L_2$ decay & Layernorm \\
\midrule
halfcheetah-random & 2 & 50 & 1 & 0.01 & F \\
halfcheetah-medium & 3 & 50 & 1 & 0.05 & F \\
halfcheetah-medium-replay & 3 & 50 & 1 & 0.05 & F \\
halfcheetah-medium-expert & 5 & 50 & 5 & 0 & T \\ 
halfcheetah-expert & 5 & 10 & 1 & 0 & T \\
\midrule
walker2d-random & 5 & 50 & 5 & 0.5 & F \\
walker2d-medium & 5 & 5 & 3 & 0.003 & T \\ 
walker2d-medium-replay & 4 & 10 & 5 & 0.01 & T \\ 
walker2d-medium-expert & 5 & 20 & 5 & 0.001 & T \\ 
walker2d-expert & 3 & 10 & 5 & 0 & T \\
\midrule
hopper-random & 5 & 10 & 1 & 0 & F \\
hopper-medium & 5 & 5 & 3 & 0 & T \\ 
hopper-medium-replay & 5 & 50 & 3 & 0 & T \\
hopper-medium-expert & 5 & 1 & 3 & 0 & T \\
hopper-expert & 5 & 1 & 3 & 0 & T \\
\midrule
antmaze-umaze & 5 & 1 & 5 & 0 & T \\
antmaze-umaze-diverse & 10 & 1 & 10 & 0.001 & F \\
antmaze-medium-diverse & 10 & 1 & 10 & 0 & T \\
antmaze-medium-play & 10 & 1 & 10 & 0 & T \\
\bottomrule
\end{tabular}}
\end{table}
\textbf{Hyper-parameters.}~
Most hyper-parameters of DRVF are the same as those of EDAC. In our experiments, we use different loss weights for the Gym Mujoco tasks and AntMaze navigation tasks. In addition, we add weight decay and layer norm in some tasks to prevent overfitting or divergence. The hyper-parameter settings are shown in Table \ref{hyperparam}. `T/F' for Layernorm means whether using Layernorm layers. 

\section{More Experimental Results.}\label{statistic}

\begin{figure}[ht]
    \centering
    \includegraphics[width=0.8\linewidth]{./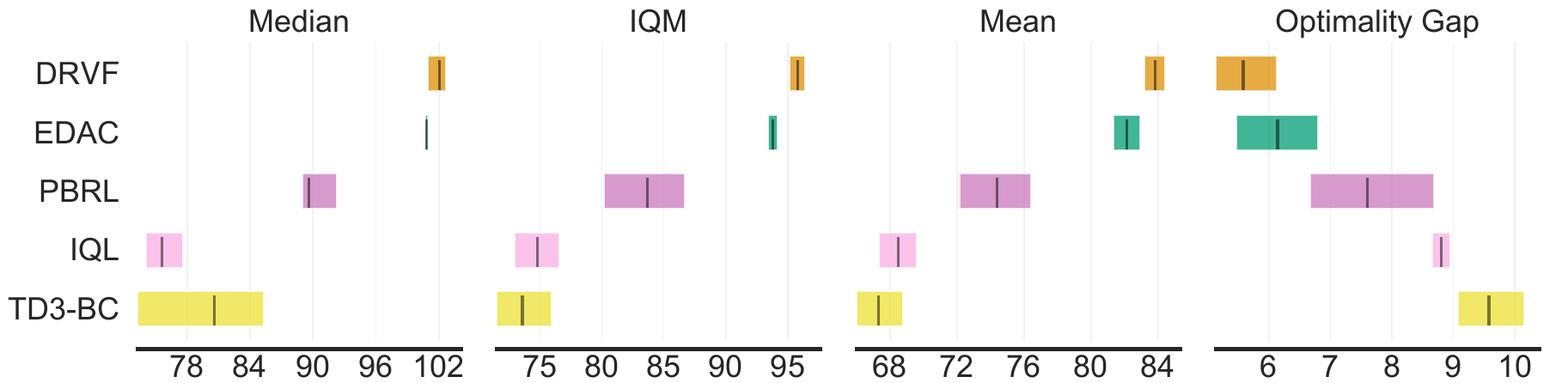}
    \caption{Aggregate metrics on D4RL with 95\% CIs based on Gym Mujoco tasks and 5 random seeds for each task. DRVF shows a higher median, mean, and IQM, and a lower optimality gap than other methods.}
    \label{statistical_1}
\end{figure}

We use more efficient and robust evaluation principles to characterize the statistical uncertainty. Specifically, we adopt the evaluation methods from \cite{agarwal2021deep} and present the comparisons in Fig. \ref{statistical_1}. We consider aggregate metrics, which include median, interquartile mean (IQM), optimality gap, and mean. IQM calculates the mean score of the middle 50\% runs with 95\% confidence intervals (CIs), and the optimality gap measures the number of runs where the algorithms fail to achieve a minimum score beyond which improvements are not very important. 
We compare our algorithm against TD3-BC, IQL, PBRL, and EDAC and conduct experiments on Gym Mujoco tasks with 5 random seeds. Higher mean, median, and IQM scores, and lower optimality gaps are expected.

\subsection{Results in AntMaze tasks.}

We also evaluate our method in challenging AntMaze navigation tasks, which require an 8-DoF ant robot to reach a target position in a maze scenario. In this task, the key challenges are dealing with sparse rewards and performing sub-trajectory stitching to reach the goal. We conduct experiments on two environments (umaze and medium) with two types of datasets. Table \ref{table2} summarizes the comparison of our method with behavior cloning (BC), Filtered (`Filt.') BC which learns from the best 10\% trajectories, and value-based methods, including CQL, TD3-BC, and IQL. We observe that our method surpasses the performance of BC and Filt. BC, as well as CQL and TD3-BC. Meanwhile, DRVF outperforms IQL in the umaze task with only 5 ensembles and obtains competitive performance in the other environments with 10 ensembles.

\begin{table}[!t]
\centering
\footnotesize
\caption{
\justifying
Performance of the baseline algorithms and the proposed method in AntMaze tasks. 
We do not report the performance of other baselines like PBRL and EDAC since their original implementations cannot obtain reasonable results in AntMaze domains.}
\resizebox{0.6\linewidth}{!}{
\begin{tabular}{lrrrrrr}
\toprule
 Environments & BC & Filt. BC & CQL & TD3-BC & IQL & DRVF \\ 
\midrule
 umaze & 54.6 & 60.0 & 23.4 & 78.6 & 88.6$\pm$0.5 & 94.2$\pm$3.1 \\
 umaze-diverse &  45.6 & 46.5 & 44.8 & 71.4 & 63.6$\pm$10.3 & 59.4$\pm$9.9 \\
 medium-diverse & 0.0 & 37.2 & 0.0 & 3.0 & 73.0$\pm$3.4 & 56.2$\pm$8.7 \\
 medium-play & 0.0 &42.1 & 0.0 & 10.6 & 74.0$\pm$3.6 & 65.2$\pm$14.4 \\
\midrule 
Average & 25.1 & 46.5 & 17.5 & 40.9 & 74.8$\pm$8.0 & 68.8$\pm$9.0 \\ 
\bottomrule
\end{tabular}}
\label{table2}
\end{table}

\section{More Ablation Studies}\label{app:ablation}

\subsection{The role of BNN}
DRVF obtains uncertainty estimation by sampling several $Q$-functions from the ensemble BNNs and uses this uncertainty to form the LCB of the value function. In this section, we discuss the necessity of Bayesian neural networks by comparing ensemble BNNs and other ensemble models without BNNs (e.g. PBRL or EDAC), whose main difference lies in the last layer of the critic network.

We first illustrate that other methods like PBRL, which does not use BNNs, cannot perform well with fewer ensembles, but may benefit from the repulsive term. We consider two settings: 1) original PBRL with fewer ensembles, 2) PBRL (fewer ensembles) combined with the repulsive term in our method. The number of ensembles used in PBRL is the same as ours, as shown in Fig. 4. We present the comparisons on 9 Gym-Mujoco tasks in Table \ref{pbrl}.
The second column illustrates that PBRL with fewer ensembles cannot achieve the same performance as DRVF. The third column demonstrates that PBRL may benefit from the OOD repulsive term in several tasks, but is still inferior to DRVF. This result indicates that the Bayesian inference module is necessary, and the repulsive regularization term may be useful for other ensemble-based methods.

\begin{table}[t]
\caption{
Comparisons with PBRL with fewer ensembles and its combination with the repulsive term. Each method uses the same number of ensembles $M$, which is presented in brackets.}
\centering
\footnotesize
\resizebox{0.9\linewidth}{!}{
\begin{tabular}{lccc}
\toprule
Environments & PBRL with fewer ensembles & PBRL (fewer ensembles) +  repulsive regularization & DRVF\\
\midrule
halfcheetah-m         & 59.8  ($M$=3) & 60.3  ($M$=3)   & \textbf{69.2 $\pm$ 2.7} ($M$=3) \\
halfcheetah-m-r       & 46.7  ($M$=3)  & 52.3  ($M$=3)   & \textbf{66.7 $\pm$ 2.0} ($M$=3) \\
halfcheetah-m-e       & 93.3  ($M$=5)  & 93.6  ($M$=5)& \textbf{104.2 $\pm$ 2.4} ($M$=5)\\
walker2d-m   & 89.2  ($M$=5)  &91.1  ($M$=5)& \textbf{95.1 $\pm$ 2.7}  ($M$=5)\\
walker2d-m-r  & 90.9 ($M$=4) & 84.5  ($M$=4)  & \textbf{96.1 $\pm$ 2.3}  ($M$=4) \\
walker2d-m-e  & 109.6 ($M$=5) & 105.6 ($M$=5) & \textbf{112.4 $\pm$ 1.1}  ($M$=5)\\
hopper-m  & 25.2  ($M$=5) & 102.2  ($M$=5) & \textbf{102.0 $\pm$ 1.2}  ($M$=5)\\
hopper-m-r  & 98.7  ($M$=5) & 101.1 ($M$=5)  & \textbf{102.9 $\pm$ 1.1} ($M$=5)\\
hopper-m-e  & 110.3  ($M$=5) & 110.8  ($M$=5)& \textbf{112.3 $\pm$ 1.8}  ($M$=5) \\
\bottomrule
\end{tabular}}
\label{pbrl}
\end{table}

To further explore the importance of BNNs, we remove the repulsive term from DRVF and only use ensemble BNNs. We compare DRVF (only ensemble BNNs) with PBRL and EDAC, whose results are directly taken from their papers. We use the same number of ensembles as PBRL or EDAC in most tasks, and use more ensembles in several walker2d or hopper tasks. We remark that although more ensembles are required in several tasks, our method does not need pessimistic updates on OOD data in PBRL or gradient diversification in EDAC. The results in Table \ref{table:more-ensembles} show that the Bayesian posterior for uncertainty quantification, which is one of our main contributions, is sufficient for favorable performance. It also implies that the repulsive regularization term, which is another contribution of our method, can decrease the number of ensembles required and improve computational efficiency.

\begin{table}[t]
\caption{
Comparisons of DRVF (only ensemble BNNs) with the results reported in PBRL and EDAC. The numbers of ensembles required $M$ are presented in brackets.}
\centering
\footnotesize
\resizebox{0.65\linewidth}{!}{
\begin{tabular}{lccc}
\toprule
Environments & PBRL-Prior & EDAC & DRVF (only ensemble BNNs)\\
\midrule
halfcheetah-m         & 58.2 $\pm$ 1.5 ($M$=10)  & 65.9$\pm$0.6 ($M$=10) & 68.1 ($M$=10) \\
halfcheetah-m-r       & 49.5$\pm$0.8 ($M$=10)  & 61.3$\pm$1.9 ($M$=10) & 62.8 ($M$=10)\\
halfcheetah-m-e       & 93.6$\pm$2.3 ($M$=10)  & 106.3$\pm$1.9 ($M$=10) & 105.6 ($M$=10)\\
walker2d-m   & 90.3$\pm$1.2 ($M$=10)  &92.5$\pm$0.8 ($M$=10) & 93.5 ($M$=20) \\
walker2d-m-r  & 86.2$\pm$3.4 ($M$=10)  & 87.0$\pm$2.3 ($M$=10) & 96.6 ($M$=10)  \\
walker2d-m-e  & 109.8$\pm$0.2 ($M$=10) & 114.7$\pm$0.9 ($M$=10) & 113.7 ($M$=10) \\
hopper-m  & 81.6$\pm$14.5 ($M$=10)  & 101.6$\pm$0.6 ($M$=50)  & 100.9 ($M$=50) \\
hopper-m-r  & 100.7$\pm$0.4 ($M$=10)  & 101.0$\pm$0.5 ($M$=50)  & 102.9 ($M$=10)\\
hopper-m-e  & 111.2$\pm$0.7 ($M$=10)  & 110.7$\pm$0.1 ($M$=50) & 111.4 ($M$=50)  \\
\bottomrule
\end{tabular}}
\label{table:more-ensembles}
\end{table}

\subsection{The role of the repulsive term}\label{ablation-repulsive}

Although Table \ref{pbrl} has shown that the repulsive term can improve the performance of PBRL in several tasks, in this section, we discuss the impact of the repulsive term in DRVF. The repulsive regularization term aims to maximize the diversity of $Q$ samples for OOD data. 

Firstly, we compare two settings: DRVF with the repulsive term and DRVF without the repulsive term. We note that in this experiment, a few ensembles ($M\leq 5$) are used, which is different from the comparison in Table \ref{table:more-ensembles}. Comparisons on 9 tasks are presented in Table \ref{ood-comparison}. We also present the normalized scores in evaluation in Fig. \ref{ablation-loss}. In the hopper-medium-replay, halfcheetah-medium, and halfcheetah-medium-replay tasks, optimizing ensemble BNNs without the repulsive regularization term is sufficient to obtain competitive performance. This result is consistent with Fig. \ref{ablation-loss}, where the repulsive term and the KL divergence have only a small impact on performance in the hopper-medium-replay task. In the other tasks, the repulsive regularization term is necessary for obtaining favorable performance. Therefore, we suggest that the repulsive term is critical for our method.

\begin{table}[ht]
\caption{Ablations on the repulsive term. The errors are evaluated across 5 random seeds.}
\centering
\footnotesize
\resizebox{0.65\linewidth}{!}{
\begin{tabular}{lcc}
\toprule
Environments & DRVF without the repulsive term & DRVF with the repulsive term \\
\midrule
halfcheetah-m         & 70.0 $\pm$ 1.6 & 69.2 $\pm$ 2.7 \\
halfcheetah-m-r       & 59.0 $\pm$ 13.5 & 66.7 $\pm$ 2.0 \\
halfcheetah-m-e       & 86.6 $\pm$ 41.9 & 104.2 $\pm$ 2.4 \\
walker2d-m            & 2.0 $\pm$ 3.4 & 95.1 $\pm$ 2.7  \\
walker2d-m-r          & 47.6 $\pm$ 48.2 & 96.1 $\pm$ 2.3 \\
walker2d-m-e          & 0.5 $\pm$ 1.0 & 112.4 $\pm$ 1.1 \\
hopper-m              & 1.7 $\pm$ 0.6 & 102.0 $\pm$ 1.2 \\
hopper-m-r            & 85.1 $\pm$ 34.4 & 102.9 $\pm$ 1.1 \\
hopper-m-e            & 5.0 $\pm$ 6.2 & 112.3 $\pm$ 1.8 \\   
\bottomrule
\end{tabular}}
\label{ood-comparison}
\end{table}

\begin{figure}[ht]
    \centering
\includegraphics[width=0.4\linewidth]{./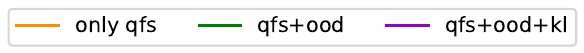}

\includegraphics[width=0.4\linewidth]{./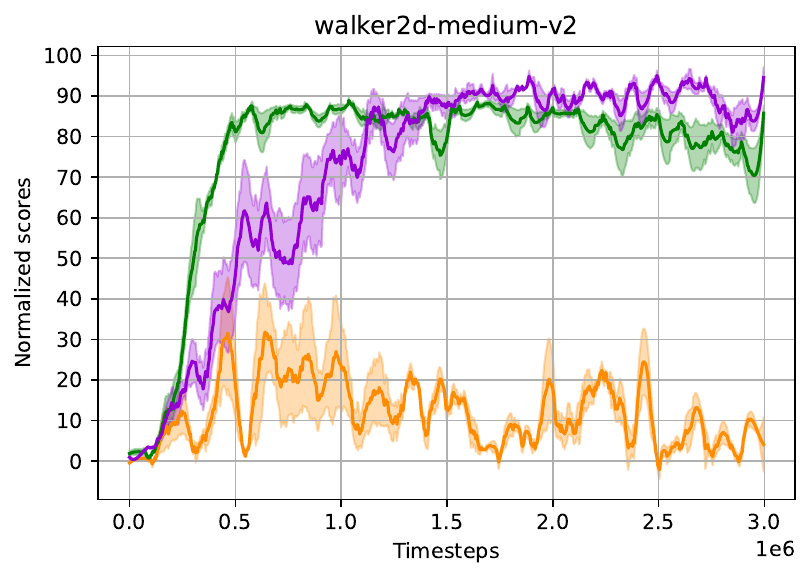}
\includegraphics[width=0.4\linewidth]{./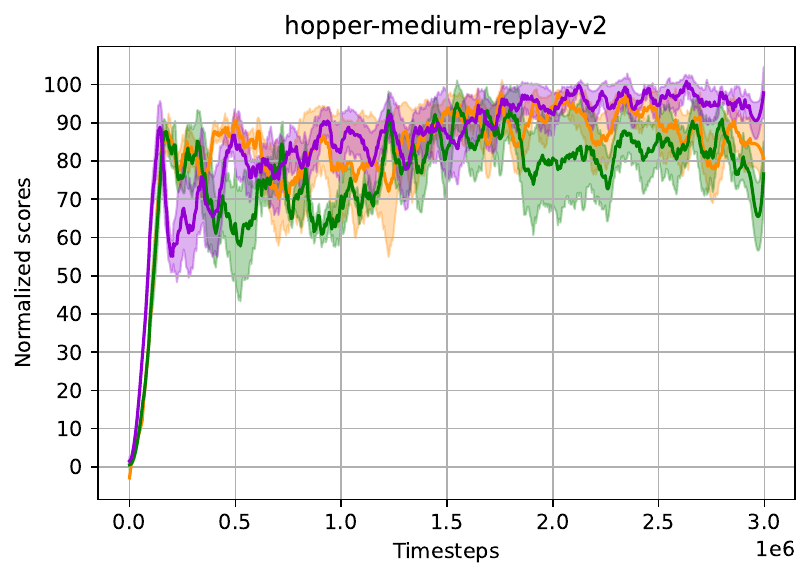}
    \caption{Ablation study on the loss functions components. `qfs', `ood', `kl' correspond to the TD error, the repulsive regulariztion term, and the KL divergence in Eq. (16). The shaded area indicates the variance. Only optimizing the TD error cannot learn effective policies in the walker2d-medium task, while showing inferior performance to optimizing the total loss in the hopper-medium-replay task.}
    \label{ablation-loss}
\end{figure}

Secondly, we investigate the impact of different ways to obtain OOD actions on the repulsive term and performance. We conduct experiments using two methods of generating OOD actions: randomly sampling from the action space or from the learned policy. The comparison is performed on 9 Gym-Mujoco tasks, and the normalized scores are shown in Table \ref{ood-actions-r}.
The results indicate that sampling from the current policy is reasonable and more effective, while random sampling OOD actions causes overestimation and final divergence of the value function in the halfcheetah and walker tasks.
Moreover, we remark that randomly sampling from the action space exhibits a significant discrepancy from sampling from the learned policies in most tasks, except for the hopper-medium-replay task where they show similar performance. We speculate that the repulsive term plays an important role in most tasks, while optimizing the loss function for ensemble BNNs alone is sufficient for competitive performance in the hopper-medium-replay task. This result is also consistent with Fig. \ref{ablation-loss}.

\begin{table}[ht]
\caption{Performance of different ways to obtain OOD actions.}
\footnotesize
\centering
\begin{tabular*}{0.65\linewidth}{lcc}
\toprule
Environments & Sampling from the  action space & Sampling from the learned policy \\
\midrule
halfcheetah-m         & 0.5$\pm$0.8 & 69.2 $\pm$ 2.7 \\
halfcheetah-m-r       & 0.3$\pm$0.8 & 66.7 $\pm$ 2.0 \\
halfcheetah-m-e       & 14.3$\pm$26.3 & 104.2 $\pm$ 2.4 \\
walker2d-m            & 12.2$\pm$11.1 & 95.1 $\pm$ 2.7  \\
walker2d-m-r          & 5.6$\pm$3.2 & 96.1 $\pm$ 2.3 \\
walker2d-m-e          & -0.4$\pm$0.2 & 112.4 $\pm$ 1.1 \\
hopper-m              & 12.9$\pm$15.3 & 102.0 $\pm$ 1.2 \\
hopper-m-r            & 80.9$\pm$32.7  & 102.9 $\pm$ 1.1 \\
hopper-m-e            & 8.6$\pm$6.8 & 112.3 $\pm$ 1.8 \\   
\bottomrule
\end{tabular*}
\label{ood-actions-r}
\end{table}

\subsection{The role of ensembles}

Ensembles are a key component of DRVF. We evaluate our algorithm with different numbers of $Q$-networks $M\in\{2, 3, 4, 5, 6\}$ and present the results in Fig. \ref{ablation-ensemble}. We observe that by using approximate Bayesian posterior and repulsive regularization, DRVF can learn effective policies with $M\approx 5$. Fig. 4 presents the minimum number of ensembles required for other tasks, indicating that DRVF needs fewer ensembles than EDAC and improves its parametric efficiency. Another suggestion from Fig. \ref{ablation-ensemble} is that more ensembles can provide more stable performance, which is consistent with our intuition that more ensembles can generate more samples and better approximation.

\begin{figure}[!t]
    \centering
    \includegraphics[width=0.6\linewidth]{./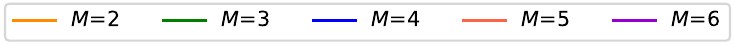}
    
    \includegraphics[width=0.4\linewidth]{./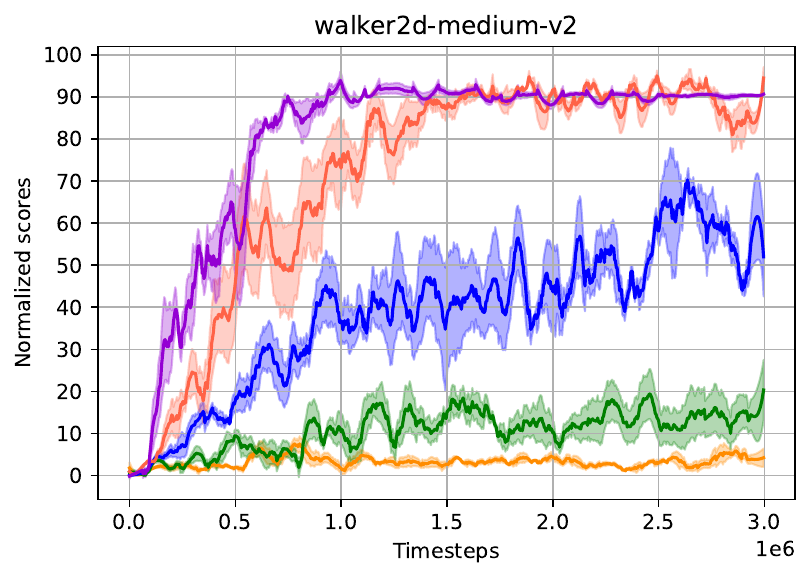}
    \includegraphics[width=0.4\linewidth]{./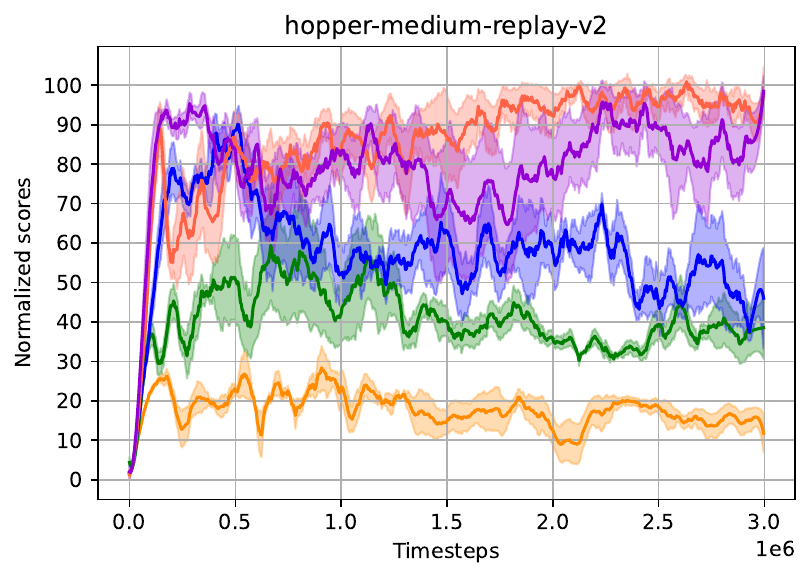}
    \caption{Ablation on the number of ensembles. The shaded area indicates the variance. DRVF can exhibit superior and stable performance with 5 or 6 ensembles.}
    \label{ablation-ensemble}
\end{figure}

\subsection{Bayesian Sampling}
Our method samples multiple $Q$-values from the Bayesian posterior and ensemble members to estimate the LCB of the $Q$-function. Intuitively, more samples can provide a better estimation of the LCB. We use $n$ to represent the number of samples from each ensemble member and compare the performance in two tasks, as displayed in Fig. \ref{ablation-bayesian-sample}. In general, we find that DRVF obtains reasonable performance even with a small number of samples (e.g., $n=2$). We speculate that variational methods can quickly obtain a posterior estimate. Meanwhile, the repulsive regularization diversifies the different samples, which drives the posterior samples widely distributed in the posterior and makes the LCB estimation easier.

\begin{figure}[ht]
    \centering
    \includegraphics[width=0.45\linewidth]{./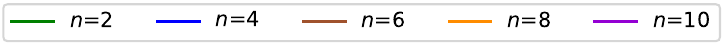}
    
    \includegraphics[width=0.4\linewidth]{./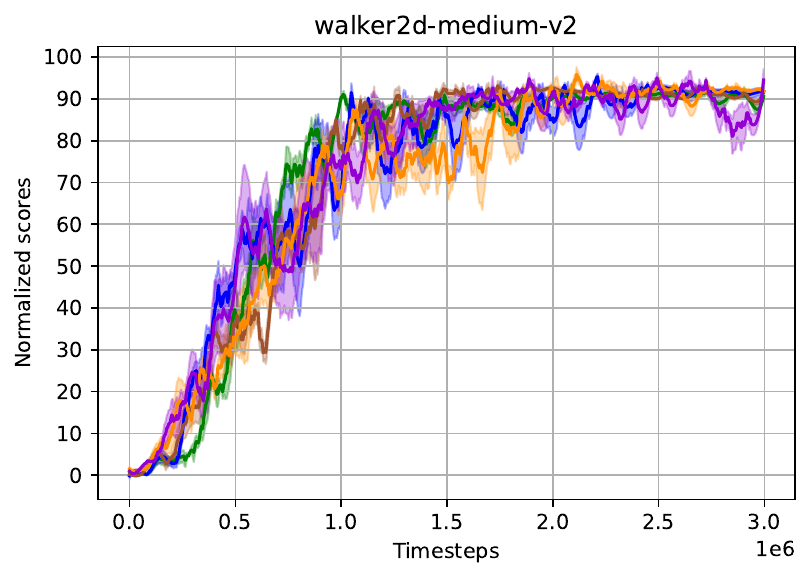}
    \includegraphics[width=0.4\linewidth]{./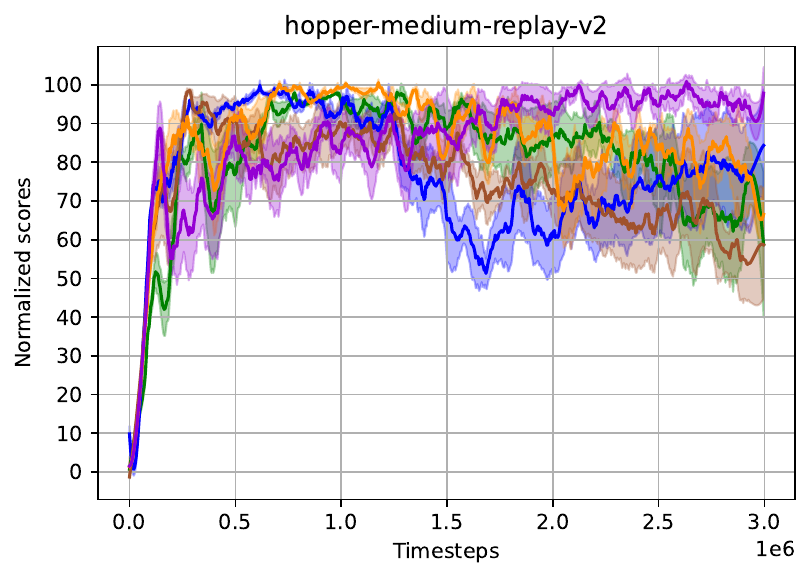}
    \caption{Ablation on the number of Bayesian samples. The shaded area indicates the variance. In the  walker2d-medium environment, a small number of samples are enough, while more samples are needed to obtain stable performance in the hopper-medium-replay environment.}
    \label{ablation-bayesian-sample}
\end{figure}

\subsection{Number of OOD Actions.}
The diversification of OOD data is implemented by OOD sampling and explicitly maximizing the uncertainty of $Q$-values for OOD actions. In the OOD sampling process, $K$ actions are sampled from the policy being trained. We perform experiments with $K \in \{1,3,5,7,10\}$ to analyze the sensitivity of our algorithm to the number of OOD actions. The curves in Fig. \ref{ablation-OOD} indicate that the number of OOD samples does not significantly influence policy training in the walker-medium task. Nevertheless, in hopper-medium-replay task, the value distribution requires more OOD actions to diversify the posterior samples.

\begin{figure}[ht]
    \centering
    \includegraphics[width=0.45\linewidth]{./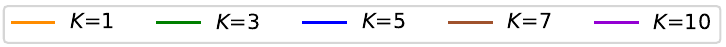}
    
    \includegraphics[width=0.4\linewidth]{./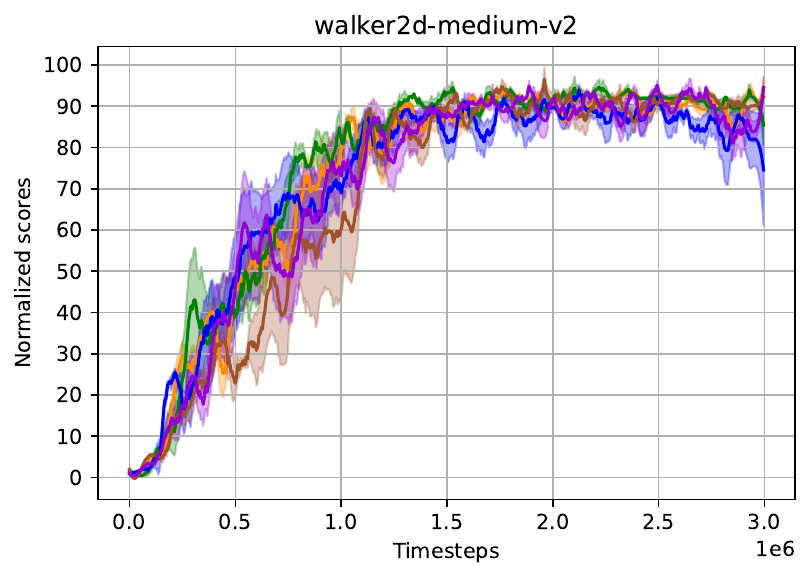}
    \includegraphics[width=0.4\linewidth]{./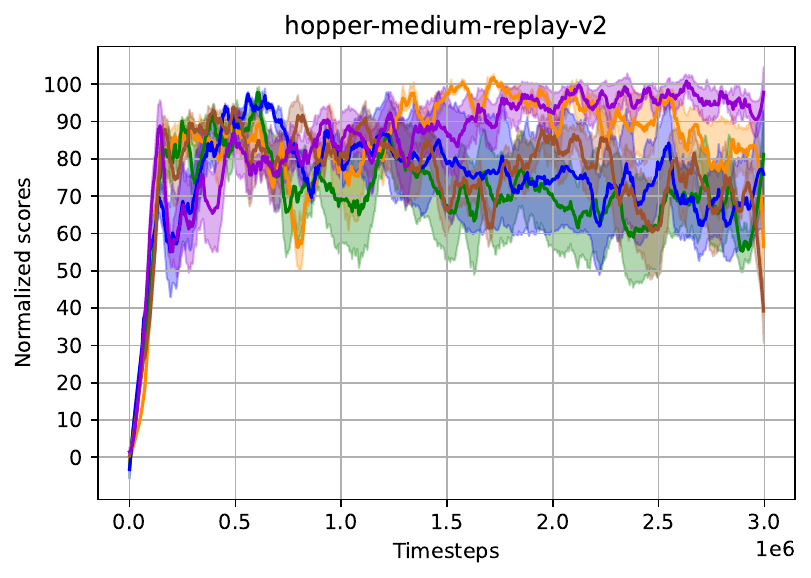}
    \caption{Ablation on the number of OOD actions. The shaded area indicates the variance. We tend to sample $K=10$ OOD actions in both environments to keep strong performance.}
    \label{ablation-OOD}
\end{figure}

\subsection{Comparison with dropout ensembles.}
We also compare DRVF with dropout ensembles and present the results in Fig. \ref{ablation-dropout}. We evaluate dropout ensembles with the same number of ensembles as DRVF and equip them with gradient diversification from EDAC or repulsive regularization from DRVF. However, we find that dropout ensembles cannot learn effective policies in all settings. We speculate that it is difficult for dropout ensembles to provide reliable uncertainty quantification and value estimation with a small number of ensembles \cite{UWAC-2020}.

\begin{figure}[ht]
    \centering
    \includegraphics[width=0.9\linewidth]{./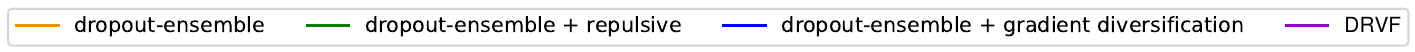}
    
    \includegraphics[width=0.4\linewidth]{./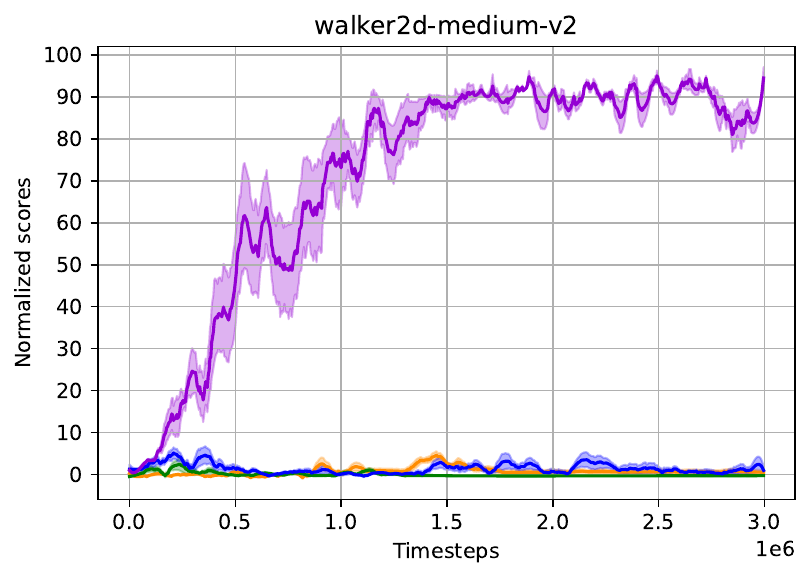}
    \includegraphics[width=0.4\linewidth]{./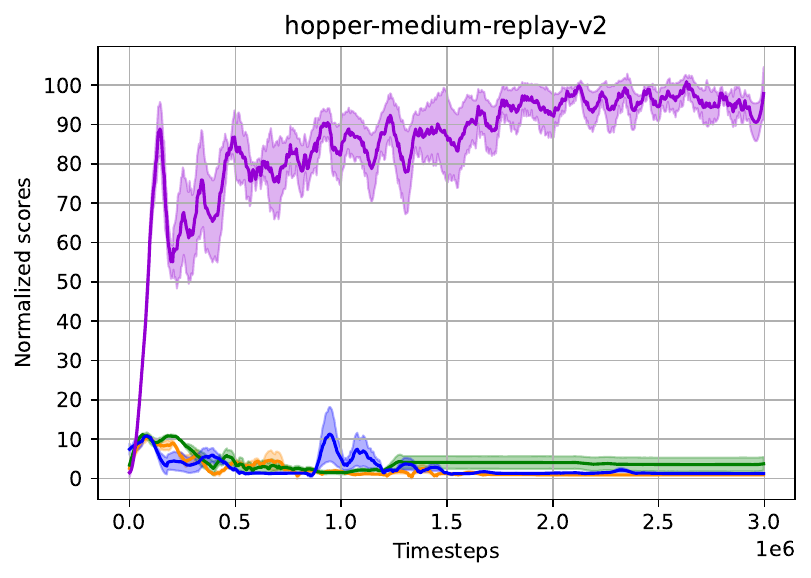}
    \caption{Comparisons with dropout ensembles. We evaluate dropout ensembles and DRVF with the same number of ensembles ($M=$ 5).}
    \label{ablation-dropout}
\end{figure}
\end{document}